\let\proof\@undefined
\let\endproof\@undefined
\def\BibTeX{{\rm B\kern-.05em{\sc i\kern-.025em b}\kern-.08em
    T\kern-.1667em\lower.7ex\hbox{E}\kern-.125emX}}
\DeclareMathAlphabet{\mathcal}{OMS}{cmsy}{m}{n}
\newcommand*{\QEDB}{\hfill\ensuremath{\square}}
\newcommand*{\QEDA}{\hfill\ensuremath{\blacksquare}}
\newcommand{\sign}{\text{sign}}
\newtheorem{prop}{Proposition}
\newtheorem{cor}{Corollary}
\newtheorem{defn}{Definition}
\newtheorem{rmk}{Remark}
\newtheorem{theorem}{\textbf{Theorem}}
\begin{document}
%
\title{Control of Mobile Robots Using Barrier Functions Under Temporal Logic Specifications}
%
%
%

\author{Mohit Srinivasan, \IEEEmembership{Student Member, IEEE}, and Samuel Coogan, \IEEEmembership{Member, IEEE}
\thanks{This work was supported in part by DARPA under Grant N66001-17-2-4059 and the National Science Foundation under Grant 1749357}
\thanks{Mohit Srinivasan is with the School of Electrical and Computer Engineering, Georgia Institute of Technology, Atlanta, Georgia 30332 USA (e-mail: mohit.srinivasan@gatech.edu).}
\thanks{Samuel Coogan is with the School of Electrical and Computer Engineering and the School of Civil and Environmental Engineering, Georgia Institute of Technology, Atlanta, Georgia 30332 (email: sam.coogan@gatech.edu)}
\thanks{Manuscript received month day, 20xx; revised month day, 20xx.}}

%
%

\markboth{Journal of \LaTeX\ Class Files,~Vol.~14, No.~8, August~2015}%
{Shell \MakeLowercase{\textit{et al.}}: Bare Demo of IEEEtran.cls for IEEE Journals}
%



\maketitle


\begin{abstract}
In this paper, we propose a framework for the control of mobile robots subject to temporal logic specifications using barrier functions. Complex task specifications can be conveniently encoded using linear temporal logic. In particular, we consider a fragment of linear temporal logic which encompasses a large class of motion planning specifications for a robotic system. Control barrier functions have recently emerged as a convenient tool to guarantee reachability and safety for a system. In addition, they can be encoded as affine constraints in a quadratic program. 
In this paper, a fully automatic framework which translates a user defined specification in temporal logic to a sequence of barrier function based quadratic programs is presented. In addition, with the aim of alleviating infeasibility scenarios,
we propose methods for composition of barrier functions as well as a prioritization based control method to guarantee feasibility of the controller. We prove that the resulting system trajectory synthesized by the proposed controller satisfies the given specification. Robotic simulation and experimental results are provided in addition to the theoretical framework.
\end{abstract}

\begin{IEEEkeywords}
Control Barrier Functions, Linear Temporal Logic, Mobile Robots, Quadratic Programs.
\end{IEEEkeywords}

%
\IEEEpeerreviewmaketitle

\section{Introduction}
\label{sec:introduction}
System specifications to be satisfied by mobile robotic systems are increasing in complexity. For example, motion planning for systems such as robotic manipulators \cite{murray_manipulators}, personal assistants \cite{pers_assistants}, and quadrotors \cite{7989375} involves complex specifications to be satisfied by the system. Safety critical systems such as the power grid \cite{DHS} and automation floors \cite{simon_2019} rely on distributed controllers in order to function in the desired manner. These controllers are again tasked with satisfying complex specifications. Hence, failure of these controllers can lead to a collapse of the safety critical infrastructure \cite{SCI}. To that end, synthesizing controllers with formal guarantees on their correct functioning is of key importance.

In this paper, we present a control architecture for the control of mobile robotic systems subject to linear temporal logic specifications using control barrier functions, which addresses some of the challenges in the previously discussed applications. In particular, we address the issue of situations where proposed methods in existing literature can render the controller infeasible. With the synthesis of the controller, we then shift focus towards providing formal guarantees regarding the proposed controller framework. In particular, we prove that the system trajectory generated by the proposed controller satisfies the given specification.

\subsection{Background}
Barrier functions were first introduced in optimization. A historical account of their use can be found in Chapter 3 in \cite{forsgren2002interior}. Usage of barrier functions is now common throughout the control, verification and robotics literature due to their natural relationship with Lyapunov-like functions. Control barrier function (CBF) based quadratic programs (QPs) were first used in \cite{ames2014control}, \cite{xu2016robustness} in the context of automotive applications such as adaptive cruise control (ACC). Recently, control barrier functions have been used in the context of multi-agent systems to guarantee collision avoidance between robots \cite{7989375}, \cite{7857061}, \cite{7526486}. Given a minimum distance to be maintained between the robots, the safety set is encoded as the super zero level set of a zeroing control barrier function (ZCBF) \cite{xu2016robustness}. The authors then use a QP based controller with the ZCBFs as affine constraints in order to guarantee forward invariance of this safety set. This in turn implies that the robots never collide. Such a framework has also been applied to quadrotors \cite{7989375} where the safety set is considered to be a super ellipsoid which allows quadrotors to avoid collisions with each other.

ZCBFs guarantee asymptotic convergence to desired sets \cite{xu2016robustness}. However, since we focus on motion planning specifications, we require finite time reachability guarantees. Recently, \cite{li2018formally}, \cite{mohit_cdc18} have introduced finite time control barrier functions for finite time reachability specifications. In \cite{li2018formally}, finite time barrier functions were used to achieve smooth transitions between different behaviors in a multi-agent system. The key objective in \cite{li2018formally} was to ensure composability of different formation behaviors by making sure that the multi-agent communication graph is appropriate for the next desired formation, whereas in \cite{mohit_cdc18}, a method for the composition of multiple finite time barrier functions was introduced. Barrier functions have also been introduced in  hybrid systems theory \cite{sanfelice} to guarantee forward invariance of hybrid inclusions.

Finite and infinite horizon specifications which are useful for mobile robotic systems can be conveniently encoded using linear temporal logic (LTL). The power of LTL originates from the wealth of tools available in the model checking literature \cite{baier2008principles} which allows for generating trajectories for the robots given a specification in temporal logic. LTL based control of robotic systems has been well studied and standard methods first create a finite abstraction of the original dynamical system \cite{alur_disc_hybsys}, \cite{belta_rect}, \cite{fully_auto_belta}, \cite{receding_topcu}. This abstraction can informally be viewed as a labeled graph that represents possible behaviors of the system. Given such a finite abstraction, controllers can be automatically constructed using an automata-based approach \cite{fully_auto_belta}, \cite{baier2008principles}, \cite{vardi_complex_goals}, \cite{fainekos2009temporal}. However, abstracting the state space is computationally expensive especially with complex system dynamics and specifications.

In our framework, we avoid the difficulties associated with computation of any automaton from the specification or a discretization of the state space. Since CBFs can be conveniently encoded within a QP, the controller is amenable to real time implementations without the need for an abstraction of the state space or the system dynamics. Other authors have explored discretization free techniques as well. The authors in \cite{lindemann2019control} discuss the use of time varying control barrier functions for signal temporal logic tasks (STL). 
In \cite{lindemann_coupled}, the authors use time-varying barrier functions for control of coupled multi-agent systems subject to STL tasks. In both \cite{lindemann2019control} and \cite{lindemann_coupled}, the authors do not allow for repetitive tasks, a specification which can be captured by our proposed framework. The authors in \cite{Raman}, \cite{belta2019formal}, \cite{Liu} discuss control methods for STL tasks. However, the methods proposed result in computationally expensive mixed integer linear programs. Control methods in the discrete time non-deterministic setting have been explored by \cite{Farhani_shrinking}. Learning based frameworks are discussed by the authors in \cite{aksaray_belta_q_learning}, \cite{muniraj2018enforcing}, \cite{dimos_control_guided}. Control techniques for continuous-time multi-agent systems given fragment of STL tasks has been presented in \cite{lindemann2018decentralized}. The authors in \cite{pant_mangharam_drones} discuss a similar continuous time method. However, a non-convex optimization problem may have to be solved.

\subsection{Contributions}
There are two primary contributions of this work. First, we propose a barrier function based controller framework to synthesize system trajectories that satisfy a given user defined specification. In particular, the proposed framework automatically translates the user defined specification formalized in a subset of linear temporal logic (LTL) to a sequence of barrier function based quadratic programs. The approach adopted in this paper is a discretization free approach which alleviates some of the computational issues arising from abstraction based control of mobile robots \cite{alur_disc_hybsys}, \cite{belta_rect}, \cite{fully_auto_belta}, \cite{receding_topcu}, \cite{vardi_complex_goals}, \cite{fainekos2009temporal}. Then, we provide formal guarantees that the proposed controller framework produces a system trajectory that satisfies the given specification. The proposed family of LTL specifications in our work can capture more complex specifications than the fragment considered in \cite{lindemann2019control, lindemann_coupled}. In addition, our guarantees are different from other papers on temporal logic based control using barrier functions \cite{lindemann2019control, lindemann_coupled} in that we characterize the family of trajectories that satisfy the given specification, and then prove that the proposed controller indeed produces a trajectory that belongs to the set of satisfying trajectories. The trajectory generated by the proposed CBF based controller is analyzed and the guarantees of CBFs translate to guarantees on the system trajectory.

Second, we address the issue of controller infeasibility for both reachability and safety objectives, a common difficulty in CBF based real-time control \cite{high_rel_deg_nguyen}, \cite{xiao2019control_rel_deg}. We first illustrate a scenario where the method of encoding multiple finite time reachability objectives individually in a CBF based QP framework---as proposed in, \emph{e.g.}, \cite{li2018formally}---fails. This is because encoding each reachability specification as a separate constraint in the QP is restrictive when the reachability objective is defined by multiple CBFs. 
We instead propose a method to compose multiple finite time control barrier functions as a single QP constraint which results in a larger feasible set while retaining the same guarantees as those established in \cite{li2018formally}. Next, 
we consider the case when some safety specifications (captured using ZCBFs) are in conflict with others and propose a prioritization scheme, similar to the method discussed in \cite{notomista2019optimal}, in order to relax the ZCBFs which characterize the safety constraints. We guarantee satisfaction of the reachability tasks while minimally violating the safety specifications. While not a main focus of this paper, this relaxation is a novel reformulation of ideas presented in \cite{xu2016robustness} and \cite{notomista2019optimal} for use in motion planning problems with finite time reachability constraints.


A preliminary version of this work was presented in our conference paper \cite{mohit_cdc18} where we formulated the notion of composition of multiple finite time control barrier functions. In the present paper, we significantly extend those results in order to synthesize an automated framework (full solution) to transition from a specification belonging to a fragment of LTL, to the barrier function based controller.

This paper is organized as follows. Section~\ref{sec:math_background} introduces control barrier functions, linear temporal logic and the quadratic program based controller used for trajectory generation of the system. In Section~\ref{sec:prob_stat}, we discuss the problem statement that is addressed in this paper. Section~\ref{sec:QP_Feasibility} introduces the idea of composite finite time control barrier functions \cite{mohit_cdc18} and the prioritization scheme for different zeroing barrier functions. In Section~\ref{sec:formal_guar}, we propose the QP based controller and develop the theoretical framework which provides a formal guarantee that the proposed controller synthesizes a system trajectory that satisfies the given specification. Section~\ref{sec:case_study} discusses a multi-agent system case study with simulation and experimental results. Section~\ref{sec:conclusion} provides concluding remarks.

\section{Mathematical Background}
\label{sec:math_background}
In this section, we provide background on control barrier functions (CBFs) and the guarantees on invariance and reachability of sets obtained from them, linear temporal logic (LTL) which is the specification language, and the quadratic program (QP) based controller with the CBFs as constraints which will be used to synthesize the trajectory for a control-affine robotic system.

\subsection{Control Barrier Functions (CBFs)}
Consider a continuous time, control-affine dynamical system
\begin{equation}
\label{ControlAffineSystem}
\dot x = f(x) + g(x)u \text{ ,}
\end{equation}
where $f: \mathcal{X} \rightarrow \mathbb{R}^n$ and $g: \mathcal{X} \rightarrow \mathbb{R}^{n \times m}$ are locally Lipschitz continuous, $x \in \mathcal{X}\subseteq \mathbb{R}^{n}$ is the state of the system, and $u \in \mathbb{R}^{m}$ is the control input applied to the system.

Before we introduce the notion of control barrier functions, we define an extended class $\mathcal{K}$ function \cite{khalil2002nonlinear} $\alpha : \mathbb{R} \rightarrow \mathbb{R}$ as a function that is strictly increasing and $\alpha(0) = 0$.

\begin{defn}[Zeroing Control Barrier Function (ZCBF)]
\label{def:zcbf}
A continuously differentiable function $h :\mathcal{X}\to\mathbb{R}$ is a \emph{zeroing control barrier function (ZCBF)} if there exists a locally Lipschitz extended class $\mathcal{K}$ function $\alpha$ such that for all $x \in \mathcal{X}$,
\begin{equation}
\label{eq:zcbf_sup}
\sup_{u\in \mathbb{R}^{m}} \bigg\{ L_{f}h(x) + L_{g}h(x)u + \alpha(h(x)) \bigg\} \geq 0
\end{equation}
where $L_{f} h(x) = \frac{\partial h(x)}{\partial x} f(x)$ and $L_{g} h(x) = \frac{\partial h(x)}{\partial x} g(x)$ are the Lie derivatives of $h$ along $f$ and $g$ respectively. \QEDB
\end{defn}

Let $\Sigma \subseteq \mathcal{X}$ be a safety set defined as $\Sigma = \{ x \in \mathcal{X} \mid h(x) \geq 0\}$ where $h : \mathcal{X} \rightarrow \mathbb{R}$ is a ZCBF. The set of control inputs that satisfy \eqref{eq:zcbf_sup} at any given state $x \in \mathcal{X}$ is then defined as
\begin{equation}
    \label{eq:zcbf_controls}
    \mathcal{U}_{\Sigma}(x) = \bigg\{ u \in \mathbb{R}^{m} \mid L_{f}h(x) + L_{g}h(x)u + \alpha(h(x)) \geq 0 \bigg\}.
\end{equation}

One can guarantee forward invariance of desired sets under the existence of a suitable ZCBF as formalized in the following proposition.

\begin{prop}[Corollary 1, \cite{xu2016robustness}]
\label{prop:zcbf}
If $h$ is a ZCBF, then any continuous feedback controller satisfying $u \in \mathcal{U}_{\Sigma}(x)$ renders the set $\Sigma$ forward invariant for the system \eqref{ControlAffineSystem}. \QEDA
\end{prop}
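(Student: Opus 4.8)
The plan is to reduce the claim to a scalar differential inequality and then invoke the Comparison Lemma (\emph{e.g.}, Lemma 3.4 in \cite{khalil2002nonlinear}). First I would close the loop with the given continuous feedback $u = k(x)$, which renders the dynamics $\dot{x} = f(x) + g(x)k(x)$ with a continuous right-hand side; this guarantees existence of a solution $x(t)$ from any initial condition. Fix an initial state $x(0) \in \Sigma$, so that $h(x(0)) \geq 0$, and define the scalar function $y(t) = h(x(t))$.

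Along the closed-loop trajectory, differentiating $y$ via the chain rule (recall $h$ is continuously differentiable and $x(t)$ is a solution) gives $\dot{y}(t) = L_f h(x(t)) + L_g h(x(t)) k(x(t))$. Since $k(x(t)) \in \mathcal{U}_\Sigma(x(t))$ by hypothesis, the defining inequality \eqref{eq:zcbf_controls} yields the differential inequality $\dot{y}(t) \geq -\alpha(y(t))$ for all $t$ in the interval of existence.

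Next I would introduce the comparison system $\dot{z} = -\alpha(z)$ with $z(0) = y(0) \geq 0$. Because $\alpha$ is locally Lipschitz, this scalar initial value problem has a unique solution, so the Comparison Lemma applies and gives $y(t) \geq z(t)$ for all $t \geq 0$. It then remains to show $z(t) \geq 0$. This is where the structural property $\alpha(0) = 0$ of the extended class $\mathcal{K}$ function enters: $z \equiv 0$ is an equilibrium of the comparison system, and by uniqueness of solutions no trajectory starting at $z(0) \geq 0$ can cross this equilibrium to become negative. Hence $z(t) \geq 0$ for all $t \geq 0$, and therefore $h(x(t)) = y(t) \geq z(t) \geq 0$, i.e. $x(t) \in \Sigma$ for all $t$, which is precisely forward invariance of $\Sigma$.

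The main obstacle is establishing nonnegativity of the comparison solution $z(t)$; everything else is bookkeeping. Its resolution hinges on combining $\alpha(0) = 0$ with the local Lipschitz assumption on $\alpha$, so that the origin is an invariant boundary of the one-dimensional comparison flow. A secondary point worth flagging is that continuity of $k$ guarantees existence but not uniqueness of closed-loop trajectories; since the argument bounds $h$ along \emph{every} solution, however, forward invariance holds regardless.
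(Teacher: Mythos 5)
Your argument is correct: reducing to the scalar inequality $\dot h \geq -\alpha(h)$ along the closed-loop trajectory and invoking the Comparison Lemma together with uniqueness of the comparison flow (which follows from $\alpha$ being locally Lipschitz with $\alpha(0)=0$) is precisely the standard proof of this result. Note that the paper itself offers no proof here --- the proposition is imported verbatim as Corollary 1 of \cite{xu2016robustness} --- and your route is essentially the one used in that cited reference, including the correct observation that non-uniqueness of closed-loop solutions is harmless because the bound holds along every solution; the only caveat worth adding is that the conclusion holds on the maximal interval of existence of $x(t)$.
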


We now define finite time convergence control barrier functions, first introduced in \cite{li2018formally}, which guarantee finite time convergence to desired sets in the state space.

\begin{defn}[Finite Time Convergence Control Barrier Function (FCBF)]
\label{def:fcbf}
A continuously differentiable function $h :\mathcal{X}\to\mathbb{R}$ is a \emph{finite time convergence control barrier function} if there exist parameters $\rho \in [0, 1)$ and $\gamma > 0$ such that for all $x \in \mathcal{X}$,
\begin{equation}
\label{eq:fcbf_sup}
\sup_{u\in \mathbb{R}^{m}} \left\{ L_{f}h(x) + L_{g}h(x)u + \gamma \cdot \text{sign}(h(x)) \cdot |h(x)|^{\rho} \right\} \geq 0
\end{equation}
where $L_{f} h(x) = \frac{\partial h(x)}{\partial x} f(x)$ and $L_{g} h(x) = \frac{\partial h(x)}{\partial x} g(x)$. \QEDB
\end{defn}

Let $\Gamma \subseteq \mathcal{X}$ be a target set defined as $\Gamma = \{ x \in \mathcal{X} \mid h(x) \geq 0\}$ where $h : \mathcal{X} \rightarrow \mathbb{R}$. Let the set of control inputs that satisfy \eqref{eq:fcbf_sup} at any state $x \in \mathcal{X}$ be given by
\begin{equation}
    \label{eq:fcbf_controls}
    \mathcal{U}_{\Gamma}(x) = \bigg\{ u \in \mathbb{R}^{m} \bigg| L_{f}h(x) + L_{g}h(x)u + \gamma \cdot \text{sign}(h(x)) \cdot |h(x)|^{\rho} \geq 0 \bigg\}
\end{equation}

If $h$ is a FCBF, then there exists a control input $u$ that drives the state of the system $x$ to the target set $\{x \in \mathcal{X} \mid h(x)\geq 0\}$ in finite time, as formalized next.

\begin{prop}[Proposition III.1, \cite{li2018formally}]
\label{prop:fcbf}
If $h$ is a FCBF for \eqref{ControlAffineSystem}, then, for any initial condition $x_{0} \in \mathcal{X}$ and any continuous feedback control $u : \mathcal{X} \rightarrow \mathbb{R}^{m}$ satisfying $u \in \mathcal{U}_{\Gamma}(x) $ for all $x \in \mathcal{X}$, the system will be driven to the set $\Gamma$ in a finite time $0 < T < \infty$ such that $x(T)\in \Gamma$, where the time bound is given by $T = \frac{|h(x_0)|^{1 - \rho}}{\gamma (1 - \rho)}$. Moreover, $\Gamma$ is forward invariant so that the system remains in $\Gamma$ for all $t \geq T$. \QEDA
\end{prop}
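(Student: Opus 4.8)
The plan is to reduce the finite-time reachability claim to a scalar differential inequality along the closed-loop trajectory and then invoke the comparison lemma \cite{khalil2002nonlinear}. First I would fix an initial condition $x_0$ with $h(x_0) < 0$ (the case $h(x_0) \geq 0$ being immediate from the forward-invariance argument below) and let $x(t)$ denote a solution of the closed loop $\dot x = f(x) + g(x)u(x)$, which exists because the vector field is continuous. Define $V(t) = h(x(t))$. Since $u(x) \in \mathcal{U}_{\Gamma}(x)$ for every $x$, the defining inequality of $\mathcal{U}_{\Gamma}$ gives, along the trajectory,
\begin{equation*}
\dot V(t) = L_{f} h(x(t)) + L_{g} h(x(t)) u(x(t)) \geq -\gamma \cdot \text{sign}(V(t)) \cdot |V(t)|^{\rho}.
\end{equation*}

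Second, while $V(t) < 0$ we have $\text{sign}(V) = -1$, so the inequality reduces to $\dot V \geq \gamma (-V)^{\rho}$. I would then apply the comparison lemma against the scalar initial value problem $\dot y = \gamma(-y)^{\rho}$, $y(0) = h(x_0)$, whose right-hand side is locally Lipschitz on $\{y < 0\}$. Separating variables and integrating yields the explicit closed form
\begin{equation*}
y(t) = -\bigl(|h(x_0)|^{1-\rho} - \gamma(1-\rho)t\bigr)^{1/(1-\rho)},
\end{equation*}
which increases monotonically and attains $y = 0$ exactly at $T = |h(x_0)|^{1-\rho}/(\gamma(1-\rho))$. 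The comparison lemma gives $V(t) \geq y(t)$ as long as both remain negative, so $V$ must reach $0$ no later than $T$; hence $h(x(T)) \geq 0$, i.e. $x(T) \in \Gamma$.

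Third, for forward invariance I would show that $\Gamma = \{x \mid h(x) \geq 0\}$ cannot be exited. On the boundary $\{h = 0\}$ the term $|h|^{\rho}$ vanishes, so the inequality above gives $\dot V \geq 0$ whenever $V = 0$; this is precisely the subtangentiality (Nagumo-type) condition guaranteeing that $\{V \geq 0\}$ is forward invariant. Equivalently, on $\Gamma$ the FCBF inequality of Definition~\ref{def:fcbf} coincides with the ZCBF inequality of Definition~\ref{def:zcbf} for the extended class $\mathcal{K}$ function $\alpha(s) = \gamma \cdot \text{sign}(s) \cdot |s|^{\rho}$, so the conclusion also follows from Proposition~\ref{prop:zcbf}. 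Combined with the reachability bound, this yields $x(t) \in \Gamma$ for all $t \geq T$.

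The main obstacle is the non-Lipschitz behavior of $|h|^{\rho}$ at $h = 0$, since $\rho \in [0,1)$. This non-smoothness is exactly what makes finite-time (rather than merely asymptotic) convergence possible, but it means one cannot naively appeal to a Lipschitz comparison solution passing through the origin. I would therefore confine the comparison argument to the open region $\{h < 0\}$, where $y \mapsto \gamma(-y)^{\rho}$ is Lipschitz and the comparison solution is unique, and treat the crossing of $h = 0$ and the subsequent invariance separately through the boundary condition $\dot V \geq 0$. Some care is additionally needed to ensure the solution $x(t)$ remains defined on $[0,T]$, which is standard under forward completeness of the closed-loop system on $\mathcal{X}$.
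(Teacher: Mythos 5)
The paper does not prove this proposition; it is imported verbatim from [li2018formally] (Proposition III.1) and stated with only a citation, so there is no in-paper proof to compare against. Your argument is the standard comparison-lemma proof of finite-time convergence and it is essentially correct: the reduction to the scalar inequality $\dot V \geq \gamma(-V)^{\rho}$ on $\{V<0\}$, the explicit comparison solution reaching zero at $T = |h(x_0)|^{1-\rho}/(\gamma(1-\rho))$, and the observation that the non-Lipschitz point $V=0$ only needs to be handled for the invariance step are all the right ingredients. One caveat on your ``equivalently'' remark: the map $\alpha(s) = \gamma\,\mathrm{sign}(s)|s|^{\rho}$ is not a \emph{locally Lipschitz} extended class $\mathcal{K}$ function (it fails the Lipschitz requirement at $s=0$ for $\rho\in(0,1)$ and is not strictly increasing when $\rho=0$), so Proposition~\ref{prop:zcbf} does not directly apply as stated in Definition~\ref{def:zcbf}. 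This does not break your proof, because invariance already follows from your primary observation more directly than via Nagumo: if $h(x(t_1))<0$ after the set has been reached, take $t_0=\sup\{t<t_1 : h(x(t))\geq 0\}$; on $(t_0,t_1]$ one has $\dot V \geq \gamma(-V)^{\rho} > 0$, so $V$ is strictly increasing there, contradicting $V(t_0)=0>V(t_1)$ by continuity. I would replace the appeal to Proposition~\ref{prop:zcbf} with this elementary contradiction, and otherwise the proposal stands.
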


ZCBFs and FCBFs will form the basis for our control synthesis methodology. Next, we discuss the temporal language used to specify complex robotic system specifications in our framework.

\subsection{Linear Temporal Logic}
Complex and rich system properties can be expressed succinctly using linear temporal logic (LTL). The power of LTL lies in the wealth of tools available in the model checking literature \cite{baier2008principles} which can be leveraged for the synthesis of controllers in the continuous domain. LTL formulas are developed using atomic propositions which label regions of interest within the state space. These formulas are built using a specific grammar. LTL formulas without the next operator are given by the following grammar \cite{baier2008principles}:
\begin{equation}
\phi = \pi | \neg \phi | \phi \vee \phi | \phi \mathcal{U} \phi
\end{equation}
where $\pi$ is a member of the set of atomic propositions denoted by $\Pi$, and $\phi$ is a propositional formula that represents an LTL specification. Since we deal with continuous time systems in this work, the use of the ``next" operator ($\bigcirc$) lacks meaningful interpretation, and hence, this operator is not included in our framework. Nonetheless, a large class of motion planning specifications (for example, the class of specifications proposed in \cite{guo_ltl_planning} and characterized below in Definition \ref{def:LTLfrag} do not require the next operator. In particular, finite time reachability specifications can be encoded in our proposed work.

We use the standard graphical notation for the temporal operators including $\Box$ (``Always"), $\Diamond$ (``Eventually"), $\Diamond \Box$ (``Persistence") and $\Box \Diamond$ (``Recurrence"). From the negation ($\neg$) and the disjunction ($\vee$) operators, we can define the conjunction ($\wedge$), implication ($\rightarrow$), and equivalence ($\leftrightarrow$) operators. We can thus derive for example, the eventually ($\Diamond$) and always ($\Box$) operators as $\Diamond \phi = \top \mathcal{U} \phi$ and $\Box \phi = \neg \Diamond \neg \phi$ respectively. Below we provide informal interpretations of these operators with respect to an $LTL$ formula $\phi$.
\begin{itemize}
\item $\Diamond \phi$ is satisfied if $\phi$ is satisfied sometime in the future. That is, $\phi$ is satisfied at some point of time in the future.
\item $\Box \phi$ is satisfied if $\phi$ is satisfied for all time. That is, $\phi$ is satisfied for all time.
\item $\Diamond \Box \phi$ is satisfied if $\phi$ becomes satisfied at some point of time in the future and then remains satisfied for all time.
\item $\Box \Diamond \phi$ is satisfied if $\phi$ is satisfied infinitely often at various points of time in the future.
\end{itemize}

Next we discuss the QP based controller which will be used for the synthesis of the system trajectory.

\subsection{Quadratic Program (QP) based controller}
Given a FCBF or ZCBF $h$, the constraints \eqref{eq:zcbf_controls} and \eqref{eq:fcbf_controls} are affine in the control input $u$, and hence they can be conveniently encoded as affine constraints in a QP. Hence this formulation is amenable to efficient online computation of feasible control inputs. In particular, for fixed $x \in \mathcal{X}$, the requirement that $u \in \mathcal{U}_{\Gamma}(x)$ and/or $u \in \mathcal{U}_{\Sigma}(x)$ becomes a linear constraint and we define a minimum energy quadratic program (QP) as
\begin{equation}
\begin{aligned}
\label{intro_qp}
& \underset{u \in \mathbb{R}^{m}}{\text{min}}
\quad ||u||_{2}^{2}\\
& \text{s.t \quad} u \in \mathcal{U}_{\Gamma}(x) \text{ and/or } u \in \mathcal{U}_{\Sigma}(x) \text{.}
\end{aligned}
\end{equation}

We note that $\eqref{intro_qp}$ can encode both finite time reachability as well as forward invariance requirements as constraints in the QP. This QP when solved returns the pointwise minimum norm control law that drives the system to the goal set $\Gamma$ in finite time and/or guarantees invariance of $\Sigma$. We will reference this idea of a QP based controller throughout this paper in the context of our theoretical framework.

\begin{rmk}
We note that multiple ZCBFs and multiple FCBFs can be encoded as separate constraints in the QP. In this case, we solve a single QP with multiple barrier function constraints. For example, see \cite{xu2016robustness}, \cite{li2018formally}.
\end{rmk}

\section{Problem Statement}
\label{sec:prob_stat}
In this paper, we consider a continuous time mobile robotic system in control affine form as in \eqref{ControlAffineSystem}.
We assume that $\mathcal{X}$ contains regions of interest which are labeled by a set of atomic propositions $\Pi = \{\pi_1, \pi_2, \pi_3, \dots, \pi_n\}$ with the labeling function $L : \mathcal{X} \rightarrow 2^\Pi$ so that $\pi \in \Pi$ is true at $x \in \mathcal{X}$ if and only if $\pi \in L(x)$. These regions may overlap and need not constitute a partition or cover of $\mathcal{X}$. For each $\sigma \in 2^{\Pi}$, we have $L^{-1}(\sigma) = \{ x \in \mathcal{X} | \sigma = L(x) \}$. Let $\Pi_{aug} = \{ \pi_1, \pi_2, \dots, \pi_n, \overline{\pi_1}, \overline{\pi_2}, \dots, \overline{\pi_n}\}$ be the augmented set of atomic propositions where we define $\overline{\pi_i} = \neg \pi_i$ for all $i \in \{ 1,2,\dots, n\}$. The set $\Pi_{aug}$ is also called the set of \textit{literals} \cite{baier2008principles}. Thus, we identify $\neg\overline{\pi}_i=\pi_i$ for all $i \in \{ 1,2,\dots, n\}$. In addition, define
\begin{equation}
\label{def:S_aug}
\begin{split}
S(\Pi_{aug}) = \{ J \subset \Pi_{aug} \mid &\pi \in J \implies \neg \pi \not \in J \text{ for all } \pi \in \Pi_{aug}\}
\end{split}
\end{equation}
\begin{equation}
\begin{split}
\label{def:P_aug}
P(\Pi_{aug}) = \{ J \subset \Pi_{aug} \mid (\pi_i \in J) \oplus & (\overline{\pi_i} \in J) \\
&\text{ for all } i\in \{ 1,2,\dots, n\}\}
\end{split}
\end{equation}
where $\oplus$ is the exclusive disjunction operator. Observe that $P(\Pi_{aug}) \subset S(\Pi_{aug})$.
A subset of $\Pi_{aug}$ belongs to the family $S(\Pi_{aug})$ if it does not contain an atomic proposition and its negation simultaneously, and it further belongs to $P(\Pi_{aug})$ if it contains each atomic proposition exclusive-or its negation.

We consider a fragment of LTL, denoted by $LTL_{robotic}$, which is a modification of the fragment considered in \cite{wolff2013efficient}. Our proposed fragment covers a large class of motion planning tasks, such as the ones discussed in \cite{guo_ltl_planning}, expected from a robotic system.

\begin{defn}[Fragment of LTL]
\label{def:LTLfrag}
The fragment $LTL_{robotic}$ is defined as the class of LTL specifications of the form
\begin{align}
\label{spec}
\phi = \phi_{globe} \wedge \phi_{reach} \wedge \phi_{rec} \wedge \phi_{act}
\end{align}
where $\phi_{globe} = \Box \psi_1$, $\phi_{reach} = \bigwedge\limits_{j \in \mathcal{I}_{2}} \Diamond \psi_{2}^{j}$, $\phi_{rec} = \bigwedge\limits_{j \in \mathcal{I}_{3}} \Box \Diamond \psi_{3}^{j}$ and $\phi_{act} = \Diamond \Box \psi_{4}$.
Here $\mathcal{I}_{2}$ and $\mathcal{I}_{3}$ are finite index sets and $\psi_{1}$, $\psi_{2}^{j}$ for all $j$, $\psi_{3}^{j}$ for all $j$ and $\psi_{4}$ are propositional formulas of the form $\psi_{i} = \bigwedge\limits_{\forall \pi \in J_i} \pi$ with $J_i \in S(\Pi_{aug})$ for all $i \in \{ 1,4\}$, $\psi_{i}^{j} = \bigwedge\limits_{\forall \pi \in J_{i}^{j}} \pi$ with $J_{i}^{j} \in S(\Pi_{aug})$ for all $i \in \{ 2,3\}$ and for all $j \in \mathcal{I}_{i}$. \QEDB
\end{defn}
Below we provide informal definitions of the specifications appearing in the above definition.
\begin{itemize}
    \item $\phi_{\text{globe}}$: This type of specification captures properties that must hold throughout the execution of the system. For example, collision avoidance with obstacles must hold at all times when a robot is navigating in the workspace.
    \item $\phi_{\text{reach}}$: Specifications of this form capture finite time reachability requirements for the system. For example, a robot must reach a region of interest within a finite time. 
    \item $\phi_{\text{rec}}$: This recurrence specification captures, for instance, scenarios where the system must visit regions infinitely often. For example, a robot must visit room A and room B infinitely often.
    \item $\phi_{\text{act}}$: This type of specification captures persistence requirements. For example, a robot must reach a region and then stay within the region for all time.
\end{itemize}

As compared to \cite{wolff2013efficient}, we additionally incorporate reachability specifications ($\Diamond$) without increasing the system complexity due to the abstraction free nature of our proposed framework. We do not include response-to-environment specifications ($\Box(A \implies \Diamond B)$) since time-varying or reactive system specifications are not considered in the context of our proposed barrier function framework. With regard to other widely used fragments, our proposed fragment allows for persistence ($\Diamond \Box$) which cannot be expressed by the Generalized Reactivity (GR(1)) fragment or computation tree logic (CTL) \cite{hadas_reactive_planning}. Our proposed fragment also allows for repetitive tasks ($\phi_{\text{act}}$ and $\phi_{\text{rec}}$) which cannot be captured by the fragment considered in very recent work on barrier function based control using temporal logic \cite{lindemann2019control, lindemann_coupled}.

For any propositional formula $\psi$ omitting temporal operators (\emph{e.g.}, a conjunction of literals) we define the following.
\begin{defn}[Proposition Set]
\label{propset}
The \emph{proposition set} for a propositional formula $\psi$, denoted $\llbracket \psi \rrbracket$,  is the set of all states that satisfy $\psi$. That is, 
\begin{align}
\llbracket \psi \rrbracket = \{ x \in \mathcal{X} \mid L(x) \models \psi\}
\end{align}
where $L(x) \models \psi$ signifies that $\psi$ is true under the evaluation for which all and only propositions in $L(x)\subset \Pi$ are true. \QEDB
\end{defn}

We assume that for each atomic proposition $\pi \in \Pi$, there exists a continuously differentiable function $h : \mathcal{X} \rightarrow \mathbb{R}$ such that $\llbracket \pi \rrbracket = \{ x \in \mathcal{X} | h_{\pi}(x) \geq 0\}$. In this paper, similar to the assumption in \cite{xu2016robustness}, we assume that $L_{g}h_{\pi}(x) \neq 0$ for all $x \in \mathcal{X}$. We ignore the measure-zero set $\{x \in \mathcal{X} | h_\pi(x)=0\}$, and identify $\llbracket \overline{\pi} \rrbracket = \{x \in \mathcal{X} | h_\pi(x) < 0\}$ for each $\pi\in\Pi$. Thus we define $h_{\overline{\pi}}(x) = - h_{\pi}(x)$ for all $\pi \in \Pi$.



The fragment $LTL_{robotic}$ encompasses a class of specifications which cover properties such as finite time reachability, persistence, recurrence, and invariance. These properties are useful to express a number of common robotic system specifications. 

Recall that for any $\sigma \in 2^{\Pi}$, $L^{-1}(\sigma) = \{ x \in \mathcal{X} | \sigma = L(x)\}$. We define a \textit{trace} as a sequence of sets of atomic propositions. The \textit{trace of a trajectory} $x(t)$ of a continuous time dynamical system is defined as the sequence of propositions satisfied by the trajectory. This is formalized in the definition below.

\begin{defn}[Trace of a trajectory \cite{wongpiromsarn2016automata}]
An infinite sequence $\sigma = \sigma_{0}\sigma_{1}\dots$ where $\sigma_{i} \subseteq {\Pi}$ for all $i \in \mathbb{N}$ is the {\normalfont trace of a trajectory} $x(t)$ if there exists an associated sequence $t_{0}t_{1}t_{2}\dots$ of time instances such that $t_{0} = 0$, $t_{k} \rightarrow \infty$ as $k \rightarrow \infty$ and for each $m \in \mathbb{N}$, $t_{m} \in \mathbb{R}_{\geq 0}$ satisfies the following conditions:
\begin{itemize}
\item $t_{m} < t_{m+1}$
\item $x(t_{m}) \in L^{-1}(\sigma_m)$
\item If $\sigma_{m} \neq \sigma_{m+1}$, then for some $t_{m}^{'} \in [t_{m}, t_{m+1}]$, $x(t) \in L^{-1}(\sigma_m)$ for all $t \in (t_{m}, t_{m}')$, $x(t) \in L^{-1}(\sigma_{m+1})$ for all $t \in (t_{m}', t_{m+1})$, and either $x(t_{m}') \in L^{-1}(\sigma_m)$ or $x(t_{m}') \in L^{-1}(\sigma_{m+1})$.
\item If $\sigma_m=\sigma_{m+1}$ for some $m$, then $\sigma_m=\sigma_{m+k}$ for all $k>0$ and $x(t)\in L^{-1}(\sigma_m)$ for all $t \geq t_m$. \QEDB
\end{itemize}
\end{defn}

The last condition of the above definition implies that a trace contains a repeated set of atomic propositions only if this set is then repeated infinitely often. This is useful to capture for example, a stability condition of the system. By forbidding repetitions in other cases, we ensure that a particular trajectory possesses a unique trace. This exclusion is without loss of generality since we only consider $LTL_{robotic}$ specifications without the next operator.

We now define the problem statement that is addressed in this paper.

\newtheorem*{probstat*}{\textbf{Problem Statement}}
\begin{probstat*}
Given a specification in $LTL_{robotic}$ as in \eqref{spec} which is to be satisfied by a mobile robotic system with dynamics as in \eqref{ControlAffineSystem}, synthesize a point-wise minimum norm controller as in \eqref{intro_qp} which produces a system trajectory whose trace satisfies the given specification.
\end{probstat*}

In this paper, we are interested in generating system trajectories using controllers of the form \eqref{intro_qp}, which guarantee satisfaction of the given $\text{LTL}_{\text{robotic}}$ specification. As a secondary objective, we are interested in achieving the stated task by expending minimum power, point-wise in time. These objectives are captured by the problem statement.

Before we detail the theoretical framework to address the above problem statement, we discuss scenarios where the QP based controller \eqref{intro_qp} could be infeasible and present approaches to alleviate infeasibility. This is important since infeasibility can lead to violation of the LTL specification.

\section{Feasibility of QP based Controller}
\label{sec:QP_Feasibility}
Given a specification $\phi$ in $LTL_{robotic}$, in this section, we focus on scenarios where using existing methods in literature \cite{xu2016robustness}, \cite{li2018formally}, \cite{7782377} will render the controller infeasible, and provide solutions for the same. Subsection A discusses Theorem~\ref{thm:comp_fcbf} which appeared in our conference paper \cite{mohit_cdc18}, while subsection B proposes a relaxed formulation of the QP based controller.

\subsection{Composite Finite Time Control Barrier Functions}
Consider two robots $R_1$ and $R_2$ as shown in the workspace in Fig~\ref{fig:ex_compfcbf}. Suppose $R_1$ is sensing information from $R_2$ and hence must always stay within the sensing radius of $R_2$. Suppose we have two regions of interest $A$, $B$ and the base $C$. Let $\mathcal{D}$ represent a corridor in the state space (denoted by the dotted lines in Fig~\ref{fig:ex_compfcbf}) where $R_1$ must maintain a very small distance of connectivity with $R_2$. This could represent, for example, an area with very poor network connectivity and hence the robots must resort to communication over small distances. Let the specification for the multi-agent system be given as $\phi = \Diamond(\pi_1^{A} \wedge \pi_{2}^{B}) \wedge \Diamond (\pi_{1}^{C} \wedge \pi_{2}^{C}) \wedge \Box \pi_{conn}$ where $\pi_{1}^{A}$ is the proposition that is true when $R_1$ is in $A$, $\pi_{2}^{B}$ is the proposition that is true when $R_2$ is in $B$, and $\pi_{conn}$ is the proposition that is true when the robots must maintain connectivity at all times. In other words, $R_1$ must visit $A$, $R_2$ must visit $B$ and then both must return to the base $C$. In addition, $R_1$ must always stay connected with $R_2$. The workspace is as shown in Fig~\ref{fig:ex_compfcbf}.
\begin{figure}[thpb]
\centering
    \includegraphics[width=7.5cm,height=5.5cm,scale = 1]{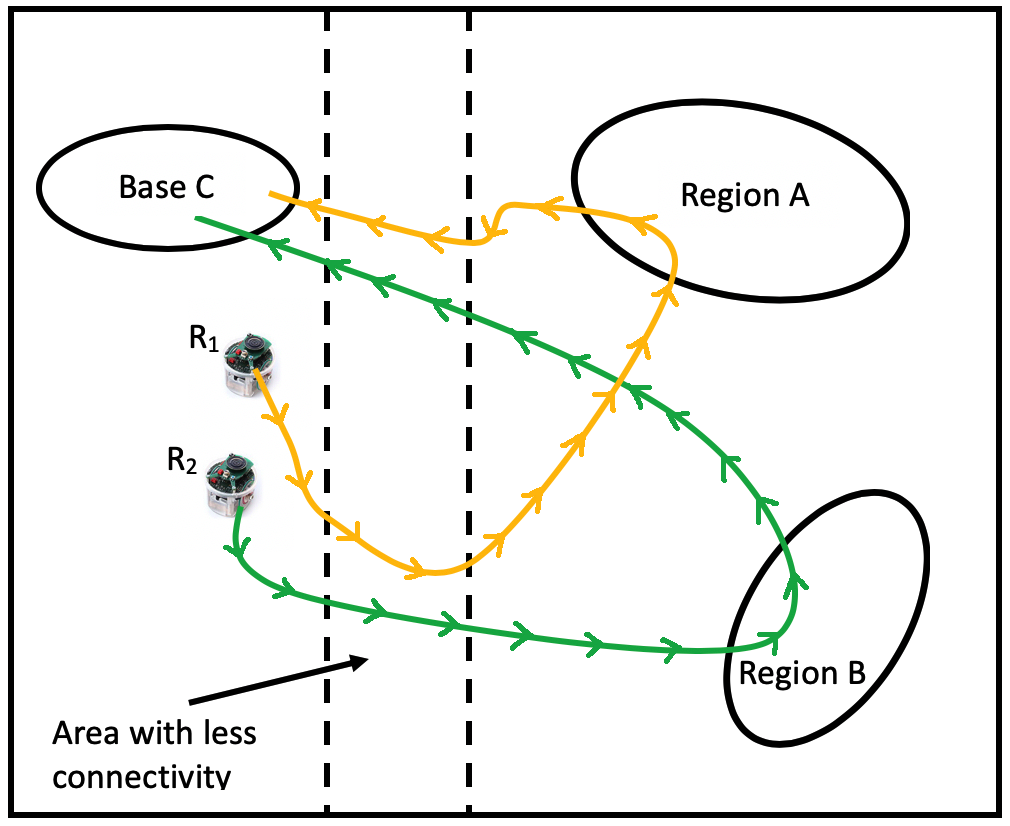}
    \caption{Representative trajectories for $R_1$ and $R_2$ that satisfy the specification $\phi = \Diamond(\pi_1^{A} \wedge \pi_{2}^{B}) \wedge \Diamond (\pi_{1}^{C} \wedge \pi_{2}^{C}) \wedge \Box \pi_{conn}$. The area with less connectivity is the corridor $\mathcal{D}$. Observe that $R_1$ and $R_2$ need to maintain a small distance of connectivity within the corridor $D$.}
    \label{fig:ex_compfcbf}
\end{figure}

By following the method proposed in \cite{li2018formally}, the QP that is to be solved is as follows:
\begin{equation}
\begin{aligned}
\label{individual_constraints}
& \underset{u \in \mathbb{R}^{4}}{\text{min}}
\quad ||u||_{2}^{2}\\
& \text{s.t \quad} L_{f}h_{A}(x_{1}) + L_{g}h_{A}(x_{1}) u \geq -\gamma \cdot \text{sign}(h_{A}(x_{1})) \cdot |h_{A}(x_{1})|^{\rho} \\
& \quad \quad L_{f}h_{B}(x_{2}) + L_{g}h_{B}(x_{2}) u \geq -\gamma \cdot \text{sign}(h_{B}(x_{2})) \cdot |h_{B}(x_{2})|^{\rho} \\
& \quad \quad L_{f}h_{conn}(x) + L_{g}h_{conn}(x) u \geq - \alpha(h_{conn}(x))
\end{aligned}
\end{equation}
where $\alpha$ is a locally Lipschitz extended class $\kappa$ function, $\gamma > 0$, $\rho \in [0,1)$, $x_1$ is the state of $R_1$, $x_2$ is the state of $R_2$, $x = \begin{bmatrix} x_1 & x_2\end{bmatrix}^{T}$ is the total state of the system, $h_{A}$ is the FCBF which represents $A$, $h_{B}$ is the FCBF which represents $B$, and $h_{conn}$ is the ZCBF which dictates the connectivity radius to be maintained by $R_1$ with $R_2$. Note that \cite{li2018formally} considers only reachability tasks and not more general LTL tasks. In addition, \cite{li2018formally} requires multiple reachability specifications to be encoded as separate constraints in a QP, as discussed above.

However, this QP becomes infeasible at the point when $R_1$ and $R_2$ reach the corridor $\mathcal{D}$. This is because the first constraint in \eqref{individual_constraints} dictates that $R_1$ make progress towards $A$, but the third constraint dictates that $R_1$ move closer to $R_2$ and hence move away from $A$. This leads to an empty solution space thus rendering the QP infeasible. This shows that the above formulation of encoding multiple reachability objectives as individual constraints is too restrictive. 

In light of the above scenario, we propose a method in which we compose multiple FCBFs. By ensuring that the total sum of the finite time barrier functions is always increasing, we can allow for decrease in the values of some of the individual barrier functions thereby allowing some robots to move away from their desired sets temporarily. This provides a larger solution space for the QP. This is formalized in the following theorem.

\begin{theorem}
\label{thm:comp_fcbf}
Consider a dynamical system in control-affine form as in \eqref{ControlAffineSystem}. Given $\Gamma \subset \mathcal{X}$ defined by a collection of $q\geq 1$ functions $\left\{ h_{i}(x) \right\}_{i = 1}^{q}$ such that $\Gamma = \bigcap\limits_{i = 1}^{q} \left\{ x \in \mathcal{X} \mid h_{i}(x) \geq 0 \right\}$ and for $ i = \left\{ 1, 2, 3, . . . , q' \right\}$ with $q' < q$, $h_{i}(x)$ is bounded i.e. $h_i(x) < M_i$ for all $x \in \mathcal{X}$, for $M_i > 0$.\footnote{If all the functions are bounded, then $q' = q$ and so we will have only \eqref{Constraint1} as a constraint in the QP $\forall i \in \left\{ 1,2,\dots, q \right\}$} If there exists a collection $\left\{ \alpha_{i} \right\}_{i = 1}^{q'}$ with $\alpha_{i} \in \mathbb{R}_{> 0}$, parameters $\gamma > 0$, $\rho \in [0,1)$ and a continuous controller $u(x)$ where $u : \mathcal{X} \rightarrow \mathbb{R}^{m}$, such that for all $x \in \mathcal{X}$
\begin{multline}
\label{Constraint1}
\sum\limits_{i = 1}^{q'} \bigg\{ \alpha_{i}(L_{f}h_{i}(x) + L_{g}h_{i}(x)u(x)) \bigg\} + \\
\gamma \cdot \text{sign} \bigg(\min \bigg\{ h_{1}(x), h_{2}(x),\dots, h_{q'}(x) \bigg\}\bigg) \geq 0
\end{multline}
\begin{multline}
\label{Constraint2}
L_{f}h_{i}(x) + L_{g}h_{i}(x)u(x) + \gamma \cdot \text{sign}({h_{i}(x)}) \cdot |h_{i}(x)|^{\rho} \geq 0 \\
\text{ $\forall$ i $\in \left\{ q'+1, \dots , q \right\}$}
\end{multline}
then under the feedback controller $u(x)$, for all initial conditions $x_{0} \in \mathcal{D}$, there exists $0 < T < \infty$ such that $x(T) \in \Gamma$.
\end{theorem}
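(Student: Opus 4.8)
The plan is to decouple the $q$ functions into the two groups that appear in the hypotheses, treat each group with a different mechanism, and then stitch the two conclusions together at a single common time. The second group, indices $i \in \{q'+1, \dots, q\}$, is governed by the individual constraints \eqref{Constraint2}, each of which is exactly the FCBF inequality \eqref{eq:fcbf_controls} for the single function $h_i$. Hence I would invoke Proposition~\ref{prop:fcbf} separately for each such $i$: the given continuous controller $u(x)$ satisfies the defining inequality for the FCBF $h_i$, so there is a finite time $T_i = |h_i(x_0)|^{1-\rho}/(\gamma(1-\rho))$ after which $h_i(x(t)) \geq 0$, and moreover $\{h_i \geq 0\}$ is forward invariant so the inequality persists. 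Setting $T^\ast = \max_{i>q'} T_i$, I would record that $x(t) \in S_2 := \bigcap_{i>q'}\{x \in \mathcal{X} \mid h_i(x) \geq 0\}$ for every $t \geq T^\ast$.

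For the first group I would introduce the aggregate function $W(x) = \sum_{i=1}^{q'} \alpha_i h_i(x)$, which is the object that \eqref{Constraint1} controls. Two facts drive the argument. First, boundedness of each $h_i$ in this group gives the uniform ceiling $W(x) < M := \sum_{i=1}^{q'} \alpha_i M_i$ for all $x \in \mathcal{X}$. Second, differentiating along the closed-loop trajectory and using \eqref{Constraint1} yields $\dot W = \sum_{i=1}^{q'} \alpha_i \bigl(L_f h_i + L_g h_i\, u(x)\bigr) \geq -\gamma\, \text{sign}\bigl(\min_i h_i(x)\bigr)$; in particular, whenever the trajectory lies outside $S_1 := \bigcap_{i \leq q'}\{x \mid h_i(x) \geq 0\}$ at least one $h_i$ is negative, so $\min_i h_i < 0$ and $\dot W \geq \gamma > 0$. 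Because this rate of increase is bounded below by the constant $\gamma$ (the absence of a $|h|^\rho$ factor in \eqref{Constraint1} is precisely what makes the bound clean), $W$ cannot remain below the ceiling $M$ while staying outside $S_1$.

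I would close the loop with a contradiction on the interval $[T^\ast, \infty)$. If $x(t) \notin S_1$ for every $t \geq T^\ast$, then $\dot W \geq \gamma$ there, so integration gives $W(x(t)) \geq W(x(T^\ast)) + \gamma(t - T^\ast) \to \infty$, contradicting $W < M$. Hence there exists $T \geq T^\ast$ with $x(T) \in S_1$, and the first such time obeys $T \leq T^\ast + (M - W(x(T^\ast)))/\gamma < \infty$. Since $T \geq T^\ast$ forces $x(T) \in S_2$ as well, we conclude $x(T) \in S_1 \cap S_2 = \Gamma$ in finite time. The footnoted case $q' = q$ is the same argument with $S_2 = \mathcal{X}$ and $T^\ast = 0$.

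The main obstacle I anticipate is not the finite-time reaching of either group in isolation but guaranteeing that all $q$ constraints hold \emph{simultaneously} at one instant $T$. The composite constraint \eqref{Constraint1} does not render $S_1$ forward invariant --- inside $S_1$ one still only has $\dot W \geq -\gamma$, so the trajectory may leave $S_1$ --- so one cannot merely wait for both groups to be reached and assume they remain reached. The resolution is asymmetric: the second group is genuinely forward invariant via Proposition~\ref{prop:fcbf}, while the first group need only be reached \emph{after} $T^\ast$, which the boundedness-of-$W$ argument delivers on the shifted interval. I would also dispatch the minor technical points that $u$ continuous together with $f,g$ locally Lipschitz yields a trajectory along which $W(x(t))$ is absolutely continuous (justifying the integration), and that $S_1$ being closed lets me take $T$ as a genuine first hitting time.
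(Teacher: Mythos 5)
Your proposal is correct and follows essentially the same route as the paper's proof: use Proposition~\ref{prop:fcbf} on the individually-constrained functions to obtain a time $T'$ after which they are all nonnegative (and remain so by forward invariance), then derive a contradiction from the unbounded growth of $\sum_{i=1}^{q'}\alpha_i h_i(x(t))$ under \eqref{Constraint1} if the remaining functions never become simultaneously nonnegative. Your write-up is somewhat more careful than the paper's --- in particular in making explicit why the two groups are satisfied at a common instant and in extracting an explicit bound on the hitting time --- but the decomposition and the key boundedness argument are identical.
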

\begin{proof}
By contradiction, suppose for some $x_0 \in \mathcal{X} \backslash \Gamma$ the control law $u(x)$ that satisfies \eqref{Constraint1} and \eqref{Constraint2} is such that there does not exist a finite time $0 < T < \infty$ so that $x(T) \in \Gamma$. In particular, then for all $t > 0$,  $min \bigg\{ h_{1}(x(t)), h_{2}(x(t)),\dots, h_{q}(x(t)) \bigg\} < 0$, where $x(t)$ is the solution to \eqref{ControlAffineSystem} initialized at $x(0)$ under the control law $u(x)$. By \eqref{Constraint2} for all $t > T_i  = \frac{|h_i(x_0)|^{1 - \rho}}{\gamma (1 - \rho)}$, we have $h_{i}(x(t)) \geq 0$ for all $i = \{ q'+1,\dots,q\}$ by Proposition 1. To that end, if we define $T' = \max\limits_{i = q'+1,\dots,q} \big\{ T_i \big\}$, then for all $t > T'$ we have, $\min \bigg\{ h_{1}(x(t)), h_{2}(x(t)),\dots, h_{q'}(x(t)) \bigg\} < 0$. In particular, observe that
\begin{align}
\label{const}
\frac{d}{dt} \sum\limits_{i = 1}^{q'}\bigg\{ \alpha_{i} h_{i}(x(t)) \bigg\} = \sum\limits_{i = 1}^{q'} \bigg\{ \alpha_{i}(L_{f}h_{i}(x) + L_{g}h_{i}(x)u(x)) \bigg\}
\end{align}
so that by integration of \eqref{const} using the fundamental theorem of calculus and \eqref{Constraint1}, we have
\begin{align*}
\sum\limits_{i = 1}^{q'}\bigg\{ \alpha_{i} h_{i}(x(t)) \bigg\} \geq \gamma \cdot (t-T') + \sum\limits_{i = 1}^{q'}\bigg\{ \alpha_{i} h_{i}(x(T')) \bigg\}.
\end{align*}
We observe that as $t \rightarrow \infty$, $\sum\limits_{i = 1}^{q'} \bigg\{ \alpha_i h_{i}(x(t))\bigg\} \rightarrow \infty$. But this is a contradiction since $h_{i}(x(t))$ for $i = \{ 1,2\dots,q'\}$ is bounded i.e. $\sum\limits_{i = 1}^{q'} \bigg\{ \alpha_i h_{i}(x(t))\bigg\} < \sum\limits_{i = 1}^{q'} \alpha_i M_i$. Thus, there exists a $0 < T < \infty$ such that $x(T) \in \bigcap\limits_{i = 1}^{q} \left\{ x \in \mathcal{X} \mid h_{i}(x) \geq 0 \right\}$.
\end{proof}

Theorem~\ref{thm:comp_fcbf} allows a system to reach an intersection of multiple regions in the state space using a single barrier certificate constraint. 
In contrast, \cite{li2018formally} proposes a more restrictive solution to the constrained reachability problem with desired level sets being individually defined by multiple FCBFs in a QP. In particular, 
\cite{li2018formally} proposes the set of control laws $\underline{\mathcal{U}}$ given by
\begin{align}
\nonumber &\underline{\mathcal{U}}(x)= \\
\nonumber &\{ u \in \mathbb{R}^{m} \mid  L_{f}h_{i}(x) + L_{g}h_{i}(x)u+ \gamma \cdot \text{sign}({h_{i}(x)}) \geq 0\\
\label{eq:other_form} &\hspace{2.1in}\forall i \in \left\{ 1, \dots , q \right\} \}.
\end{align}
Note that this is equivalent to taking $q' = 0$ in Theorem~\ref{thm:comp_fcbf}. Define
\begin{align}
\label{eq:compfcbf_controls}
\mathcal{U}(x) =\{ u \in \mathbb{R}^{m} \mid \text{\eqref{Constraint1} and \eqref{Constraint2} are satisfied} \},
\end{align}
and $\widehat{\Gamma} = \bigcup\limits_{i=1}^{q} \{ x \in \mathcal{X} \mid h_{i}(x) \geq 0\}$. We then formulate the following corollary.
\begin{cor}\label{cor:comp_fcbf}
Under the hypotheses of Theorem~\ref{thm:comp_fcbf} with $\sum\limits_{i=1}^{q'}\alpha_{i} \geq 1$, the set $\mathcal{U}(x)$ defined in \eqref{eq:compfcbf_controls} is a superset to the set $\underbar{$\mathcal{U}$}(x)$ defined in \eqref{eq:other_form}. That is, $\underbar{$\mathcal{U}$}(x) \subseteq \mathcal{U}(x)$ for all $x \in \mathcal{X} \backslash \widehat{\Gamma}$.
\end{cor}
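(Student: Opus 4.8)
The plan is to prove the inclusion pointwise: fix an arbitrary $x \in \mathcal{X}\setminus\widehat{\Gamma}$ and an arbitrary $u \in \underline{\mathcal{U}}(x)$, and then verify that this same $u$ satisfies both \eqref{Constraint1} and \eqref{Constraint2}, i.e.\ that $u\in\mathcal{U}(x)$. The single observation that drives the entire argument is that $\widehat{\Gamma}=\bigcup_{i=1}^{q}\{x\in\mathcal{X}\mid h_i(x)\ge0\}$, so $x\notin\widehat{\Gamma}$ forces $h_i(x)<0$ for \emph{every} $i\in\{1,\dots,q\}$. Consequently $\text{sign}(h_i(x))=-1$ for all $i$, and since $\min\{h_1(x),\dots,h_{q'}(x)\}<0$ as well, we also have $\text{sign}(\min\{h_1(x),\dots,h_{q'}(x)\})=-1$. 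Thus on $\mathcal{X}\setminus\widehat{\Gamma}$ every sign term appearing in \eqref{Constraint1}, \eqref{Constraint2}, and \eqref{eq:other_form} collapses to $-1$, which is exactly what makes the two control sets directly comparable.

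First I would dispatch the unbounded indices. For $i\in\{q'+1,\dots,q\}$, the inequality that $u$ must satisfy in order to belong to $\underline{\mathcal{U}}(x)$ is precisely the individual constraint \eqref{Constraint2}; hence these constraints are inherited verbatim and require no further work. In particular, this half of the verification makes no use of the hypothesis $\sum_{i=1}^{q'}\alpha_i\ge1$.

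Next I would establish the composite constraint \eqref{Constraint1}. For each bounded index $i\in\{1,\dots,q'\}$, membership $u\in\underline{\mathcal{U}}(x)$ together with $\text{sign}(h_i(x))=-1$ yields $L_{f}h_i(x)+L_{g}h_i(x)u\ge\gamma$. Scaling the $i$-th such inequality by $\alpha_i>0$ and summing over $i=1,\dots,q'$ gives
\begin{equation*}
\sum_{i=1}^{q'}\alpha_i\left(L_{f}h_i(x)+L_{g}h_i(x)u\right) \;\geq\; \gamma\sum_{i=1}^{q'}\alpha_i \;\geq\; \gamma ,
\end{equation*}
where the last inequality is exactly the point at which the hypothesis $\sum_{i=1}^{q'}\alpha_i\ge1$ is used. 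Since $\gamma=-\gamma\,\text{sign}(\min\{h_1(x),\dots,h_{q'}(x)\})$ on $\mathcal{X}\setminus\widehat{\Gamma}$, moving this term to the left-hand side reproduces \eqref{Constraint1}. Combining with the previous paragraph, $u$ satisfies both \eqref{Constraint1} and \eqref{Constraint2}, so $u\in\mathcal{U}(x)$ and therefore $\underline{\mathcal{U}}(x)\subseteq\mathcal{U}(x)$ for every $x\in\mathcal{X}\setminus\widehat{\Gamma}$.

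The step I expect to be the crux is the sign bookkeeping rather than any analytic estimate: it is membership in the complement of the \emph{union} $\widehat{\Gamma}$—not of the intersection $\Gamma$—that simultaneously drives every $h_i(x)$ negative and thereby fixes each sign term, including the one inside the $\min$, to $-1$. If even a single $h_i(x)$ were nonnegative, an individual sign term or the $\min$-term could flip, and the weighted-sum lower bound $\gamma\sum_{i=1}^{q'}\alpha_i$ would no longer align with the threshold demanded by \eqref{Constraint1}; this is precisely why the claim is restricted to $\mathcal{X}\setminus\widehat{\Gamma}$. The only remaining point requiring care is to confirm that the per-index lower bound extracted from $\underline{\mathcal{U}}(x)$ is stated in the same normalization as the sign terms in \eqref{Constraint1} and \eqref{Constraint2} (i.e.\ the $\text{sign}(\cdot)$ versus $\text{sign}(\cdot)\,|h_i|^{\rho}$ convention); once the signs are pinned as above, the weighting argument with $\sum_{i=1}^{q'}\alpha_i\ge1$ closes the composite constraint and the individual constraints match term by term.
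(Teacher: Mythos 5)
Your argument is correct and follows the paper's proof essentially step for step: on $\mathcal{X}\setminus\widehat{\Gamma}$ every $h_i(x)<0$ pins all sign terms to $-1$, the per-index bounds $L_fh_i(x)+L_gh_i(x)u\ge\gamma$ are scaled by $\alpha_i$ and summed using $\sum_{i=1}^{q'}\alpha_i\ge1$ to obtain \eqref{Constraint1}, and the remaining indices give \eqref{Constraint2} directly. The normalization point you flag at the end (the missing $|h_i(x)|^{\rho}$ factor in \eqref{eq:other_form} versus its presence in \eqref{Constraint2}) is passed over in exactly the same way in the paper's own proof, so your treatment matches theirs.
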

\begin{proof}
Note that for $q' = 0$ and $q' = 1$, it follows that $\underline{\mathcal{U}}(x) = \mathcal{U}(x)$. For any $q' \in \{2,\ldots,q-1\}$, consider any $u(x) \in \underline{\mathcal{U}}(x)$ applied to the system \eqref{ControlAffineSystem}. Hence, we have that 
\begin{align*}
    L_{f}h_{i}(x) + L_{g}h_{i}(x) u(x) &\geq -\gamma \cdot \sign(h_{i}(x(t)))\geq \gamma \\
    \alpha_{i}(L_{f}h_{i}(x) + L_{g} & h_{i}(x)u(x)) \geq \alpha_{i} \cdot \gamma
\end{align*}
for all $i \in \{ 1,2,\ldots,q\}$ since $x \in \mathcal{X} \backslash \widehat{\Gamma}$. Summing the inequalities for the barrier functions from $i = 1, 2, \ldots, q'$, we get
\begin{align*}
    \sum\limits_{i=1}^{q'} \bigg\{ \alpha_{i}(L_{f}h_{i}(x) + L_{g}h_{i}(x) u(x)) \bigg\} \geq \sum\limits_{i=1}^{q'}\alpha_{i} \cdot \gamma \geq \gamma
\end{align*}
where the last inequality follows by assumption that $\sum\limits_{i=1}^{q'} \alpha_i \geq 1$. This implies $u(x)$ satisfies \eqref{Constraint1} since $\sign \bigg(\min \bigg\{ h_{1}(x), h_{2}(x),\dots, h_{q'}(x) \bigg\}\bigg) < 0$ for all $x \in \mathcal{X} \backslash \widehat{\Gamma}$, and $L_{f}h_{i}(x) + L_{g}h_{i}(x) u(x) \geq -\gamma \cdot \sign(h_{i}(x(t)))$ for all $i \in \{ q'+1, \ldots, q\}$ which implies $u(x)$ also satisfies \eqref{Constraint2}. Hence, $u(x) \in \mathcal{U}(x)$. Thus the corollary follows.
\end{proof}

Theorem~\ref{thm:comp_fcbf} allows for additional directions of evolution for the system state thereby resulting in a more relaxed approach to the finite time reachability problem. Corollary 1 provides an intuition regarding the key take-away from Theorem~\ref{thm:comp_fcbf}.

\subsection{Prioritization of Zeroing Control Barrier Functions}
In this subsection, we introduce a methodology for prioritizing different ZCBFs. In particular, our proposed formulation is similar to \cite{notomista2019optimal} where different tasks represented by multiple ZCBFs are prioritized for a multi-agent system. In particular, we stipulate that the performance objective dictated by the FCBFs must never be relaxed, and hence, they are encoded as \emph{hard} constraints (i.e. constraints which must never be violated), whereas the ZCBFs can be relaxed and hence, they are encoded as \emph{soft} constraints (i.e. constraints which could potentially be violated). Our proposed method is different from \cite{notomista2019optimal} in the sense that, in addition to the ZCBFs, we also incorporate FCBFs which are treated as hard constraints in the QP based controller.

Consider the following motivating example. Suppose we have a goal region $\mathcal{G} = \{ x \in \mathcal{X} | h_{\mathcal{G}}(x) \geq 0\}$ where $h_{\mathcal{G}} : \mathcal{X} \rightarrow \mathbb{R}$ is a FCBF, encapsulated by an obstacle $\mathcal{O} = \{ x \in \mathcal{X} | h_{\mathcal{O}}(x) \leq 0\}$ where $h_{\mathcal{O}} : \mathcal{X} \rightarrow \mathbb{R}$ is a ZCBF. Suppose the specification for the robot is $\phi = \Diamond \mathcal{G} \wedge \Box \neg \mathcal{O}$. By following the method proposed in existing works such as \cite{7857061}, \cite{7526486}, \cite{7989375}, \cite{li2018formally}, the QP that is to be solved is as follows:
\begin{equation}
\begin{aligned}
\label{qp}
& \underset{u \in \mathbb{R}^{m}}{\text{min}}
\quad ||u||_{2}^{2}\\
& \text{s.t \quad} L_{f}h_{\mathcal{G}}(x) + L_{g}h_{\mathcal{G}}(x) u \geq -\gamma \cdot \text{sign}(h_{\mathcal{G}}(x)) \cdot |h_{\mathcal{G}}(x)|^{\rho} \\
& \quad \quad L_{f}h_{\mathcal{O}}(x) + L_{g}h_{\mathcal{O}}(x) u \geq -\alpha(h_{\mathcal{O}}(x))
\end{aligned}
\end{equation}
where $\gamma > 0$, $\rho \in [0,1)$ and $\alpha$ is a locally Lipschitz extended class $\kappa$ function.

However, since the goal is encapsulated by the obstacle, the two constraints are in conflict and hence the QP will be infeasible. In order to tackle scenarios such as the one above, we propose a relaxed formulation of the QP similar to the one in \cite{xu2016robustness}, \cite{notomista2019optimal}.

Consider $p$ ZCBFs and $n$ FCBFs. Let $\mathcal{P}$ be the index sets for the zeroing barrier functions. Some or all of the ZCBFs may be in conflict with the composite FCBF. The generalized relaxed QP is of the form,
\begin{equation}
\begin{aligned}
\label{relaxed_qp}
& \underset{v = \begin{bmatrix} u^{T}, \epsilon_1, \dots, \epsilon_{p}\end{bmatrix}^{T} \in \mathbb{R}^{m+p}}{\text{min}}
\quad ||u||_{2}^{2} + \frac{1}{2}\Xi^{T}W\Xi \\
& \text{s.t \quad} \eqref{Constraint1} \text{ holds } \\
& \quad \quad L_{f}h_{i}(x) + L_{g}h_{i}(x) u(x) \geq -\alpha_{i}(h_{i}(x)) - \epsilon_i \text{ , }\forall i \in \mathcal{P}
\end{aligned}
\end{equation}
where $\Xi = \begin{bmatrix} \epsilon_1, \epsilon_2, \dots, \epsilon_p \end{bmatrix}^{T} \in \mathbb{R}^{p}$, $W \in \mathbb{R}^{p \times p}$ is a diagonal matrix with the diagonal elements as $(w_1, w_2, \dots, w_p)$ where $w_i \in \mathbb{R}_{>0}$ is a weight associated with the the slack variable $\epsilon_i$ for all $i \in \{ 1,2,\dots, p\}$, and $\alpha_i$ is a locally Lipschitz extended class $\kappa$ function. The weight matrix $W$ allows one to encode the notion of ``priority" for the barrier functions. For example, if the weight $w_i$ corresponding to the slack variable $\epsilon_i$ is large, then then $i^{th}$ ZCBF has higher priority over other constraints.

\begin{rmk}
Similar to the discussion in Remark 2 in \cite{xu2016robustness}, if the reachability and invariance constraints are not in conflict, then with an appropriate choice of the weight matrix $W$, we will have $\epsilon_i \approx 0$ for some $i \in \{ 1,2,\dots, q\}$. Also, note that we extend the formulation provided in \cite{xu2016robustness} from two constraints to multiple constraints.
\end{rmk}

The relaxed QP returns a control law that allows the system to reach the desired level set in a finite time while minimally violating the invariance constraints if there is a conflict with the FCBF. We present the following case study which uses the relaxation based controller \eqref{relaxed_qp}.
\subsubsection{Example}
Consider an omnidirectional robot with dynamics $\dot{x} = u$ 
%
where $x \in \mathcal{X} \subset \mathbb{R}^2$, and $u \in \mathbb{R}^2$. Let $\mathcal{D} \subset \mathcal{X}$ be a compact domain in the state space. The workspace is as shown in Fig~\ref{fig:casestudy2_traj}. Suppose we have two unsafe regions $A$ and $B$ and a goal region $C$. Let $C$ be contained within $A$ and $B$. Suppose the specification to be satisfied by the robot is $\phi = \Diamond C \wedge \Box (\neg A \wedge \neg B)$. From Fig~\ref{fig:casestudy2_traj}, we observe that satisfaction of $\phi$ is impossible without entering the regions $A$ or $B$. However, suppose that region $A$ has greater priority than region $B$ and hence violation of $B$ is allowed to some extent.

With this additional flexibility, we can employ the proposed QP as in \eqref{relaxed_qp} with the weights $w_A \in \mathbb{R}_{> 0}$ set to be a large value and $w_B \in \mathbb{R}_{> 0}$ set to be a small value.  We then solve \eqref{relaxed_qp} which gives us a family of trajectories (depending on the values of the weights $w_A$ and $w_B$) of the robot as shown in Fig~\ref{fig:casestudy2_traj}. Observe that with different weights $w_A$ and $w_B$ for the regions in the QP, we obtain a different trajectory. This allows one to also encode the notion of priority in the QP.
\begin{figure}
    \includegraphics[width=0.95\columnwidth
    ]{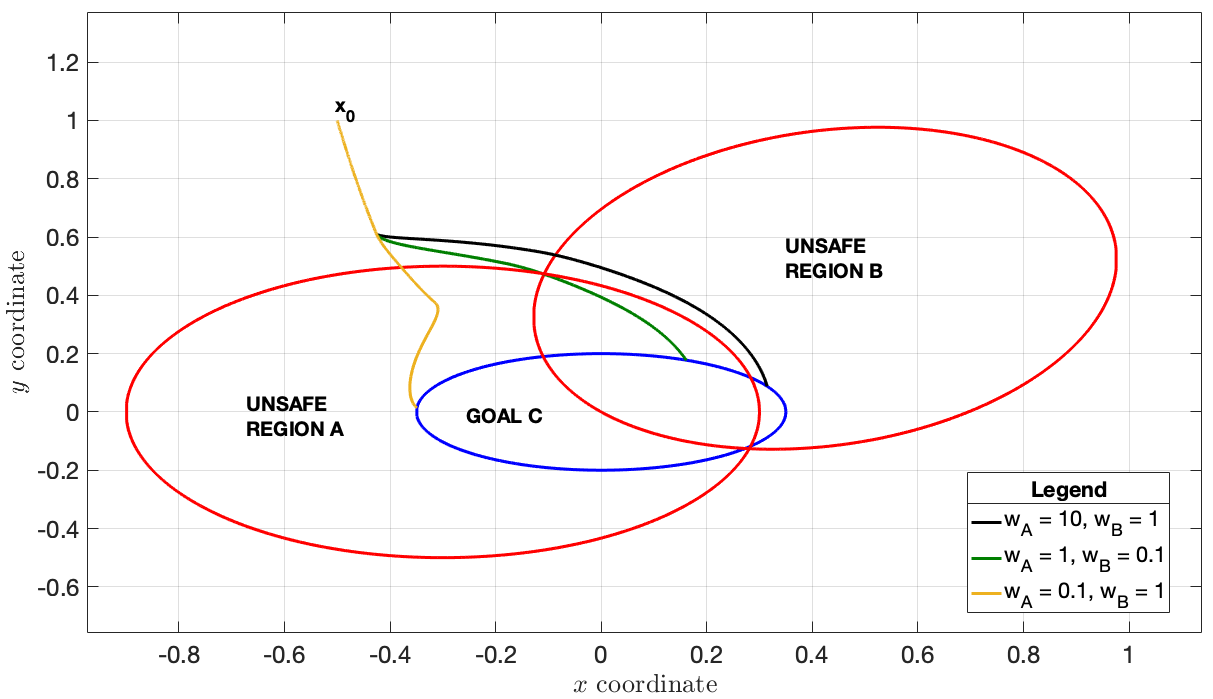}
      \caption{A family of trajectories for the robot generated by the relaxed QP \eqref{relaxed_qp}. By changing the values of the entries in the weight matrix $W$, one can encode the notion of priority for different regions in the state space as can be seen from the various trajectories.}
      \label{fig:casestudy2_traj}
\end{figure}

\section{Synthesis and Analysis of Quadratic Program based Controller}
\label{sec:formal_guar}
In this section, we detail the theoretical framework which provides formal guarantees that the quadratic program (QP) based controller indeed produces a system trajectory that satisfies the given specification. We also describe the methodology to synthesize the barrier funtion based QP controller given an $LTL_{robotic}$ specification.

\subsection{Lasso Type Constrained Reachability Objectives}
It is well established that if there exists a trace that satisfies a specification belonging to the fragment $LTL_{\text{robotic}}$, then there exists a trace which satisfies the specification in \emph{lasso} or \emph{prefix-suffix} form (\cite{baier2008principles}, pp 272), where a trace $\sigma$ is in lasso form if it is comprised of a finite horizon prefix $\sigma_{pre}$ and a finite horizon suffix $\sigma_{\text{suff}}$ that is repeated infinitely often. Both $\sigma_{pre}$ and $\sigma_{\text{suff}}$ are finite sequences of sets of atomic propositions such that the trace $\sigma$ is equal to the prefix followed by an infinite repetition of the suffix. Such a lasso-type trace is denoted as $\sigma = \sigma_{pre}(\sigma_{\text{suff}})^{\omega}$, where $\omega$ denotes infinite repetition. Atomic propositions of a continuous time dynamical system are subsets of the domain, and, hence, it is possible to interpret such lasso traces as sequences of constrained reachability problems in lasso form, which forms the basis of our control synthesis methodology. This is formalized in the following definitions.

\begin{defn}[Constrained reachability objective]
\label{def:reach_obj}
Given a target set $\Gamma \subset \mathcal{X}$ and a safety set $\Sigma \subset \mathcal{X}$, the \emph{constrained reachability objective}, denoted by $R(\Sigma, \Gamma)$, is defined as the reachability problem to be solved so that the state of the system reaches the set $\Gamma$ in finite time while remaining in $\Sigma$ until it reaches $\Gamma$. \QEDB
\end{defn}

The constrained reachability objective for a system \eqref{ControlAffineSystem} is solved from a given initial condition in $\Sigma$ if a control policy is found which drives the state of the system to $\Gamma$ while remaining in $\Sigma$ until it reaches $\Gamma$. For example, a reachability objective denoted by $R(B, A)$ signifies that the system must reach region $A$ in finite time while staying in region $B$ until it reaches region $A$. The constrained reachability objective implies finding a control policy that solves the above objective successfully.

\begin{defn}[Lasso Type Constrained Reachability Sequence]
\label{lassodefn}
A \emph{lasso-type constrained reachability sequence} is a sequence of constrained reachability objectives in lasso form such that each subsequent safety set is compatible with the prior goal set. That is, a lasso-type constrained reachability sequence has the form
  \begin{align}
    \label{reachlasso}
\mathcal{R}_{lasso} = \bigg(R_1R_2\ldots R_p\bigg)\bigg(R_{p+1},R_{p+2}\ldots R_{p+\ell}\bigg)^\omega \text{,}
  \end{align}
where $p > 0$, $\ell\geq 1$, and each $R_j=R(\Sigma_j, \Gamma_j)$ for some $\Gamma_j,\Sigma_j\subset \mathcal{X}$ satisfying $\Gamma_j\subseteq \Sigma_{j+1}$ for all $j\in\{1,2,\ldots,p+\ell-1\}$ and $\Gamma_{p+\ell} \subseteq \Sigma_{p+1}$. The sequence $(R_1R_2\ldots R_p)$ is a finite horizon prefix objective and $(R_{p+1},R_{p+2}\ldots R_{p+\ell})$ is a finite horizon suffix objective that is repeated infinitely often. \QEDB
\end{defn}
The lasso-type constrained reachability sequence is considered feasible if each constituent reachability objective is solved successfully in sequence. For example, consider the task specification, ``The robot must first visit region $A$, then region $B$, while avoiding the obstacle $C$. The lasso-type reachability sequence that satisfies this task is given by
\begin{align*}
    \mathcal{R}_{\text{lasso}} = \bigg( R_1 R_2 \bigg) \bigg(R_3 \bigg)^{\omega}
\end{align*}
where $R_1 = R(\text{C}, \text{A})$, $R_2 = R(\text{C}, \text{B})$ and $R_3 = R(\text{C}, \emptyset)$. Note that if $p = 0$, then the finite prefix has length zero and the lasso sequence is then given by
  \begin{align}
  \label{eq:zero_prefix_lasso}
    \mathcal{R}_{lasso} = \bigg(R_{1},R_{2}\ldots R_{\ell}\bigg)^\omega \text{.}
  \end{align}

By the preceding discussion, if there exists a trace that satisfies a given $LTL_{robotic}$ specification, then there exists a lasso-type constrained reachability sequence which, if feasible, guarantees that the system satisfies  the $LTL_{robotic}$ specification. One can view the lasso type reachability sequence as a bridge between the $LTL_{robotic}$ specification and the set based approach of our proposed controller.

\subsection{Construction Of Lasso-type Reachability Sequence}
Consider a $LTL_{robotic}$ specification $\phi$ as in \eqref{spec}. Given $\phi$, our first objective is to generate the lasso-type constrained reachability sequence of the form \eqref{reachlasso}.
\begin{defn}[Lasso Template]
\label{template}
Given a $LTL_{robotic}$ specification $\phi$, a \emph{lasso template} is an enumeration of the form
\begin{align}
\mathcal{O}_2 : \{ 1,2,\dots, k\} \rightarrow \mathcal{I}_2\\
\mathcal{O}_3 : \{ 1,2,\dots, \ell \} \rightarrow \mathcal{I}_3
\end{align}
where the index sets $\mathcal{I}_2$ and $\mathcal{I}_3$ are as per Definition~\ref{def:LTLfrag} and $k = |\mathcal{I}_2|$ and $\ell = \max\{ |\mathcal{I}_3|, 1\}$. \QEDB
\end{defn}

Note that it is computationally straightforward to obtain \emph{some} lasso template simply by arbitrarily enumerating the elements of the index sets $\mathcal{I}_2$ and $\mathcal{I}_3$. 

A lasso-type reachability sequence of the form \eqref{reachlasso} or \eqref{eq:zero_prefix_lasso} is constructed using Algorithm 1. The $LTL_{\text{robotic}}$ specification, and the lasso template $\mathcal{O}_{2}$ and $\mathcal{O}_{3}$ are the inputs to Algorithm 1. The output is a lasso-type constrained reachability sequence of the form \eqref{reachlasso} or \eqref{eq:zero_prefix_lasso}.

\subsection{Synthesis of Quadratic Program based Controller}
\begin{algorithm}
\caption{Lasso-type Reachability Sequence Generator}
 \hspace*{\algorithmicindent} \textbf{Input : $\phi$, $\mathcal{O}_2$, $\mathcal{O}_3$} \\
 \hspace*{\algorithmicindent} \textbf{Output: $\mathcal{R}_{lasso}$}
 \begin{algorithmic}[1]
 \IF{$J_4 \neq \emptyset$}
 \STATE $p \leftarrow k+1$
 \IF{$k \neq 0$}
 \STATE $\Gamma_i = \llbracket \psi_{2}^{\mathcal{O}_{2}(i)} \rrbracket$ for all $i = 1,2,\dots,p-1$
 \STATE $\Sigma_i = \llbracket \psi_1 \rrbracket$ for all $i = 1,2,\dots, p-1$
 \ENDIF
 \STATE $\Gamma_p = \llbracket \psi_{4} \rrbracket$
 \STATE $\Sigma_p = \llbracket \psi_{1} \rrbracket$
 \STATE $\Gamma_{p+i} = \llbracket \psi_{3}^{\mathcal{O}_{3}(i)} \rrbracket$ for all $i = 1,2,\dots, \ell$
 \IF{$J_1 = \emptyset$}
 \STATE $\Sigma_{p+i} = \llbracket \psi_4 \rrbracket$ for all $i =1,2,\dots,\ell$ 
 \ELSE
 \STATE $\Sigma_{p+i} = \llbracket \psi_1 \rrbracket \cap \llbracket \psi_4 \rrbracket$ for all $i = 1,2,\dots,\ell$
 \ENDIF
 \STATE $R_{i} = R_{i}(\Sigma_i, \Gamma_i)$ for all $i = 1,2,\dots, p+\ell$
 \RETURN $\mathcal{R}_{lasso}$ as in \eqref{reachlasso}
 \ELSE
 \STATE $p \leftarrow k$
 \IF{$p \neq 0$}
 \STATE $\Gamma_i = \llbracket \psi_{2}^{\mathcal{O}_{2}(i)} \rrbracket$ for all $i = 1,2,\dots,p$
 \STATE $\Gamma_{p+i} = \llbracket \psi_{3}^{\mathcal{O}_{3}(i)} \rrbracket$ for all $i = 1,2,\dots, \ell$
 \STATE $\Sigma_i = \llbracket \psi_1 \rrbracket$ for all $i = 1,2,\dots, p+\ell$
 \STATE $R_{i} = R_{i}(\Sigma_i, \Gamma_i)$ for all $i = 1,2,\dots, p+\ell$
 \RETURN $\mathcal{R}_{lasso}$ as in \eqref{reachlasso}
 \ELSE
  \STATE $\Gamma_{p+i} = \llbracket \psi_{3}^{\mathcal{O}_{3}(i)} \rrbracket$ for all $i = 1,2,\dots, \ell$
 \STATE $\Sigma_{p+i} = \llbracket \psi_1 \rrbracket$ for all $i = 1,2,\dots,\ell$
 \STATE $R_{i} = R_{i}(\Sigma_i, \Gamma_i)$ for all $i = 1,2,\dots, p+\ell$
 \RETURN $\mathcal{R}_{lasso}$ as in \eqref{eq:zero_prefix_lasso}
 \ENDIF
 \ENDIF
 \end{algorithmic}
\end{algorithm}

We next encode the reachability objectives as finite time and zeroing control barrier functions in a QP. This is described in Algorithm 2. Each $\Gamma_i$ is encoded with FCBFs with \eqref{eq:fcbf_controls} or \eqref{eq:compfcbf_controls} as constraint(s) whereas each $\Sigma_i$ is encoded with ZCBFs with \eqref{eq:zcbf_controls} as constraint(s) in the QP. The designer is free to choose a locally Lipschitz $\alpha$ function for \eqref{eq:zcbf_controls}. In order to solve a particular reachability objective $R_{i}(\Sigma_i, \Gamma_i)$ where $i \in \{1,2,\dots,n\}$, we solve a QP as in \eqref{intro_qp}. Note that solving a QP in real time is typically done in a few milliseconds, and hence Algorithm 2 is amenable to real time implementation on robotic platforms.
\begin{algorithm}
\caption{Quadratic Program based Controller}
 \hspace*{\algorithmicindent} \textbf{Input : $\mathcal{R}_{lasso}$}
\begin{algorithmic}[1]
\IF{$p \neq 0$}
\FOR{$i = 1,2,\dots, p$}
\STATE Encode $\Gamma_i$ with FCBFs
\STATE Encode $\Sigma_i$ with ZCBFs
\WHILE{$x \notin \Gamma_i$}
\STATE Solve $R(\Sigma_i, \Gamma_i)$ as in \eqref{intro_qp}
\ENDWHILE
\ENDFOR
\ENDIF
\WHILE{true}
\FOR{$i = p+1, \dots, p+\ell$}
\STATE Encode $\Gamma_i$ with FCBFs
\STATE Encode $\Sigma_i$ with ZCBFs
\WHILE{$x \notin \Gamma_i$}
\STATE Solve $R(\Sigma_i, \Gamma_i)$ as in \eqref{intro_qp}
\ENDWHILE
\ENDFOR
\ENDWHILE
\end{algorithmic}
\end{algorithm}

\subsection{Analysis Of Trajectory Generated by QP Controller}
Observe that there is a one-to-one correspondence between elements of $P(\Pi_{aug})$ and subsets of $\Pi$. Let $\iota : 2^{\Pi} \rightarrow P(\Pi_{aug}) \subset 2^{\Pi_{aug}}$ be the canonical bijective mapping for a subset $\sigma \in 2^{\Pi}$ with the corresponding mapping $\iota(\sigma) \in P(\Pi_{aug})$ given by, 
\begin{equation}
\pi \in \sigma \iff \pi \in \iota(\sigma) \text{ and } \pi \not \in \sigma \iff \overline{\pi} \in \iota(\sigma). 
\end{equation}
For notational convenience, we do not explicitly differentiate between a subset $\sigma\subset \Pi$ and its mapping $\iota(\sigma)\in P(\Pi_{aug})$.

Given Algorithm 2, we now provide formal guarantees which prove that the QP from Algorithm 2 indeed produces a system trajectory which satisfies the system specification.

\begin{defn}[Descendant]
\label{def:desc}
Given a $LTL_{robotic}$ specification $\phi$ with a lasso template $\mathcal{O}_2$ and $\mathcal{O}_3$ as in Definition \ref{template}, a \emph{descendant} of the lasso template is any infinite length sequence of the form
\begin{multline}
\label{desc}
\sigma = \bigg\{ \sigma_{1,1} \sigma_{1,2} \dots \sigma_{1, n_1}\bigg\} \bigg\{ \sigma_{2,1} \sigma_{2,2} \dots \sigma_{2, n_2}\bigg\} \dots \\
\bigg\{ \sigma_{p,1} \sigma_{p,2} \dots \sigma_{p,n_{p}}\bigg\} \dots \text{,}
\end{multline}
where $\sigma_{i, j} \in P(\Pi_{aug})$ for all $i = 1,2,\dots$, $j = 1,2,\dots, n_i$ and
\begin{enumerate}
\item $J_1 \subseteq \sigma_{i, j}$ for all $i \in \{ 1,2,\dots, p\}$ and for all $j \in \{ 1,2,\dots, n_i\}$
\item $J_{2}^{\mathcal{O}_{2}(i)} \subseteq \sigma_{\mathcal{O}_{2}(i),n_{\mathcal{O}_{2}(i)}}$ for all $i \in \{ 1,2,\dots, k\}$
\item $J_4 \subseteq \sigma_{p, n_{p}}$
\item $J_{3}^{\mathcal{O}_{3}(i)} \subseteq \sigma_{m, n_m}$ where $m = p + d \ell + \mathcal{O}_{3}(i)$ for all $d \in \{ 0,1,2 \dots\}$ and for all $i \in \{ 1,2,\dots,\ell\}$
\item $J_1 \cup J_4 \subseteq \sigma_{i, j}$ for all $i \in \{ p+1, \dots\}$ and for all $j \in \{ 1,2,\dots, n_i\}$. \QEDB
\end{enumerate}
\end{defn}

Intuitively, a descendant $\sigma$ of a given template is a sequence of atomic propositions visited by the system such that it respects the safety sets $\Sigma_i$ and also reaches the target sets $\Gamma_i$ in a finite time for all $i \in \{ 1,2,\dots,p+l\}$. Consider the example discussed before, where a robot must first visit region $A$, then region $B$ while avoiding the obstacle $C$. The lasso template for this task is $\mathcal{O}_{2}(1) = A$, $\mathcal{O}_{2}(2) = B$. Given this template, one valid instantiation of the descendant \eqref{desc} is
\begin{align*}
    \sigma = \bigg\{ \{\bar{A},\bar{B},\bar{C}\}\{A,\bar{B},\bar{C}\} \bigg\} \bigg\{ \{\bar{A},\bar{B},\bar{C}\}\{\bar{A}, B,\bar{C}\} \bigg\}\bigg\{ \{ \bar{A}, B, \bar{C}\}\bigg\}^{\omega} , 
\end{align*}
which satisfies the conditions of Definition~\ref{def:desc}.

In \eqref{desc}, each set $\sigma_i = \{ \sigma_{i,1} \sigma_{i,2} \dots \sigma_{i,n_i}\}$ corresponds to the $i^{th}$ constrained reachability objective in the lasso sequence \eqref{reachlasso} or \eqref{eq:zero_prefix_lasso} and the set $\sigma_p = \{ \sigma_{p,1}\sigma_{p,2}\dots\sigma_{p,n_p}\}$ is the last constrained reachability objective in the finite prefix part of the lasso sequence after which the sequence switches to the suffix.

\begin{prop}
\label{prop:descendant}
Given a lasso template as in Definition \ref{template} for a $LTL_{robotic}$ specification $\phi$ as in \eqref{spec}, any descendant $\sigma$ of this template is such that $\sigma \models \phi$.
\end{prop}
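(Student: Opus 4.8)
The plan is to show that any descendant $\sigma$ satisfies each of the four conjuncts in the specification $\phi = \phi_{globe} \wedge \phi_{reach} \wedge \phi_{rec} \wedge \phi_{act}$ separately, and then conclude by the semantics of conjunction that $\sigma \models \phi$. The key observation is that each defining condition of a descendant in Definition~\ref{def:desc} is tailored to enforce satisfaction of exactly one temporal conjunct, so the proof is essentially a matter of carefully matching conditions (1)--(5) to the subformulas $\phi_{globe}$, $\phi_{reach}$, $\phi_{rec}$, and $\phi_{act}$ and invoking the LTL semantics for $\Box$, $\Diamond$, $\Box\Diamond$, and $\Diamond\Box$. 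I would first fix notation by recalling that $\sigma = \sigma_{1,1}\sigma_{1,2}\ldots$ is an infinite sequence of elements of $P(\Pi_{aug})$, identified via the bijection $\iota$ with subsets of $\Pi$, and that $\psi_i = \bigwedge_{\pi \in J_i} \pi$, so that $\sigma_{i,j} \models \psi_i$ holds precisely when $J_i \subseteq \sigma_{i,j}$.

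Next I would handle the conjuncts one at a time. For $\phi_{globe} = \Box\psi_1$: I would split the index range into the prefix block ($i \in \{1,\ldots,p\}$, covered by condition~(1)) and the suffix blocks ($i \geq p+1$, covered by condition~(5), since $J_1 \subseteq J_1 \cup J_4$); together these give $J_1 \subseteq \sigma_{i,j}$ at every position of the entire sequence, hence $\psi_1$ holds at all time steps, which is exactly $\Box\psi_1$. For $\phi_{reach} = \bigwedge_{j \in \mathcal{I}_2} \Diamond\psi_2^{j}$: using condition~(2), for each $i \in \{1,\ldots,k\}$ the atomic propositions $J_2^{\mathcal{O}_2(i)}$ are satisfied at the terminal position $\sigma_{\mathcal{O}_2(i),n_{\mathcal{O}_2(i)}}$; since $\mathcal{O}_2$ enumerates all of $\mathcal{I}_2$, every eventuality $\Diamond\psi_2^{j}$ is witnessed at some finite position, establishing $\phi_{reach}$. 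For $\phi_{act} = \Diamond\Box\psi_4$: condition~(3) gives $J_4 \subseteq \sigma_{p,n_p}$ and condition~(5) gives $J_4 \subseteq \sigma_{i,j}$ for all $i \geq p+1$, so from the position $\sigma_{p,n_p}$ onward $\psi_4$ holds forever, which is precisely the persistence requirement. For $\phi_{rec} = \bigwedge_{j \in \mathcal{I}_3} \Box\Diamond\psi_3^{j}$: condition~(4) states that $J_3^{\mathcal{O}_3(i)} \subseteq \sigma_{m,n_m}$ for $m = p + d\ell + \mathcal{O}_3(i)$ and \emph{every} $d \in \{0,1,2,\ldots\}$; since this provides infinitely many witnessing positions (one per repetition $d$ of the suffix) for each $j \in \mathcal{I}_3$, each $\psi_3^{j}$ is satisfied infinitely often, which is recurrence.

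The main obstacle I anticipate is not any single conjunct but the bookkeeping needed to make the index arithmetic in condition~(4) rigorous, together with arguing that the block-wise structure of the descendant is consistent with the semantics of traces over continuous-time trajectories. Specifically, I would need to confirm that the flattened infinite sequence obtained by concatenating the blocks $\{\sigma_{i,1}\ldots\sigma_{i,n_i}\}$ is a genuine LTL trace (so that the standard satisfaction relation applies) and that the ``suffix repeated infinitely often'' interpretation aligns the index $m = p + d\ell + \mathcal{O}_3(i)$ with the correct physical block in the lasso. I would also take care with the degenerate cases permitted by Definition~\ref{template} and Algorithm~1---notably $p=0$ (empty prefix, so only conditions~(4) and~(5) are active) and the possibility that some index sets $J_i$ are empty (in which case the corresponding $\psi_i$ is vacuously $\top$ and the associated conjunct is trivially satisfied). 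Once these edge cases are dispatched and the four conjuncts verified, the conclusion $\sigma \models \phi$ follows immediately from the semantics of $\wedge$.
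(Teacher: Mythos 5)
Your proposal is correct and follows essentially the same route as the paper: verify each of the four conjuncts separately by matching it to the corresponding condition(s) of Definition~\ref{def:desc}, then combine via the semantics of conjunction. If anything, you are slightly more careful than the paper's Case~1, which cites only condition~(1) for $\Box\psi_1$ even though condition~(5) is needed to cover the suffix blocks $i \geq p+1$, exactly as you point out.
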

\begin{proof}
Let $\phi = \phi_{globe} \wedge \phi_{reach} \wedge \phi_{rec} \wedge \phi_{act}$ be a specification as in \eqref{spec}. Let $\mathcal{O}_2$ and $\mathcal{O}_3$ be a lasso template for the specification. Let $\sigma$ be a descendant of the lasso template as in Definition \ref{def:desc}.

We provide a proof by construction by considering four individual cases for the specification $\phi$. Then, since conjunction preserves the results from these cases (Fig 5.2, pp 236 \cite{baier2008principles}), we combine them to provide a proof for the entire fragment of LTL.

Case 1: Suppose $\phi = \phi_{globe} = \Box \psi_1$ for $\psi_1 = \bigwedge\limits_{m=1}^{n} \pi_m$, where $\pi_m \in \Pi_{aug}$. Thus we have $J_1 = \{\pi_1, \dots, \pi_n \}$, $J_{2}^{\mathcal{O}_{2}(i)} = \{ \emptyset \}$ for all $i \in \{ 1,2,\dots, k\}$, $J_{3}^{\mathcal{O}_{3}(i)} = \{ \emptyset \}$ for all $i \in \{ 1,2,\dots, \ell\}$ and $J_4 = \{ \emptyset \}$. A descendant trace of the template is as per Definition \ref{def:desc}. Thus, from condition 1 in Definition~\ref{def:desc}
, we observe that $J_1 = \{\pi_1, \dots, \pi_n \} \subseteq \sigma_{i,j}$ for all $i \in \{1,2,\dots\}$ and for all $j \in \{ 1,2, \dots, n_i\}$. Hence, we can conclude that $\sigma \models \phi_{globe}$.

Case 2: Suppose $\phi = \phi_{act} = \Diamond \Box \psi_4$ for $\psi_4 = \bigwedge\limits_{m=1}^{n} \pi_m$ where $\pi_{m} \in \Pi_{aug}$. Thus we have $J_1 = \{ \emptyset \}$, $J_{2}^{\mathcal{O}_{2}(i)} = \{ \emptyset \}$ for all $i \in \{ 1,2,\dots, k\}$, $J_{3}^{\mathcal{O}_{3}(i)} = \{ \emptyset \}$ for all $i \in \{ 1,2,\dots, \ell\}$ and $J_4 = \{\pi_1, \dots, \pi_n \}$. A descendant trace of the template has a closed form expression as in Definition \ref{def:desc}. Thus, from condition 3 in Definition~\ref{def:desc}, we have $J_4 \subseteq \sigma_{p, n_p}$, and from condition 5 in Definition~\ref{def:desc}, we observe that $J_4 = \{\pi_1, \dots, \pi_n \} \subseteq \sigma_{i,j}$ for all $i \in \{p+1,p+2,\dots\}$ and for all $j \in \{ 1,2, \dots, n_i\}$. Hence, we can conclude that $\sigma \models \phi_{act}$.

Case 3: Suppose $\phi = \phi_{reach} = \bigwedge\limits_{j \in \mathcal{I}_{2}} \Diamond \psi_{2}^{j}$. Thus we have $J_1 = \{ \emptyset \}$, $J_{3}^{\mathcal{O}_{3}(i)} = \{ \emptyset \}$ for all $i \in \{ 1,2,\dots, \ell\}$ and $J_4 = \{ \emptyset \}$. A descendant trace of the template has a closed form expression as in Definition \ref{def:desc}. Thus, from condition 2 in the definition, we observe that $J_{2}^{\mathcal{O}_{2}(m)} = \{\pi_1, \dots, \pi_n \} \subseteq \sigma_{\mathcal{O}_{2}(m),n_{\mathcal{O}_{2}(m)}}$ for all $m \in \{ 1,2,\dots, k\}$. Hence, we can conclude that $\sigma \models \phi_{reach}$.

Case 4: Suppose $\phi = \phi_{rec} = \bigwedge\limits_{j \in \mathcal{I}_{3}} \Box \Diamond \psi_{3}^{j}$.  Thus we have $J_1 = \{ \emptyset \}$, $J_{2}^{\mathcal{O}_{2}(i)} = \{ \emptyset \}$ for all $i \in \{ 1,2,\dots, k\}$ and $J_4 = \{ \emptyset \}$. A descendant trace of the template has a closed form expression as in Definition \ref{def:desc}. Thus, from condition 4 in the definition, we observe that $J_{3}^{\mathcal{O}_{3}(q)} = \{\pi_1, \dots, \pi_n \} \subseteq \sigma_{m, n_m}$ for all $m = p + dl + \mathcal{O}_{3}(q)$, for all $d \in \{ 0,1,2 \dots\}$ and for all $q \in \{ 1,2,\dots, \ell\}$. Hence, we can conclude that $\sigma \models \phi_{rec}$.

Thus, by combining the results from Cases 1, 2, 3 and 4 with conjunction \cite{baier2008principles} (Fig 5.2, pp 236 \cite{baier2008principles}), we can conclude that $\sigma$ satisfies $\phi = \phi_{globe} \wedge \phi_{reach} \wedge \phi_{rec} \wedge \phi_{act}$. That is, $\sigma \models \phi$.
\end{proof}

Next we state Theorem~\ref{thm:formal_guarantee} which provides a theoretical guarantee that if Algorithm 2 is feasible, then the trace of the resulting system trajectory satisfies the specification.

\begin{theorem}
\label{thm:formal_guarantee}
Given a $LTL_{robotic}$ specification $\phi$ and a lasso template $\mathcal{O}_2$ and $\mathcal{O}_3$ as in Definition \ref{template}, let $\mathcal{R}_{lasso}$ be the lasso-type constrained reachability sequence as in \eqref{reachlasso} generated from Algorithm 1. If Algorithm 2 is feasible, then the trace of the system trajectory $x(t)$ satisfies $\phi$.
\end{theorem}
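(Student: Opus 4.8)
The plan is to connect the three layers established earlier in the paper: the correctness guarantee that Algorithm~2 solves each constrained reachability objective (via the control barrier function propositions), the structural guarantee that feasibility of the lasso sequence yields a descendant trace (Definition~\ref{def:desc}), and finally Proposition~\ref{prop:descendant}, which certifies that any descendant satisfies $\phi$. The core idea is that feasibility of Algorithm~2 means every QP solved in sequence produces a control input satisfying the barrier constraints, so that each reachability objective $R_i(\Sigma_i,\Gamma_i)$ is solved successfully, and the trace of the resulting trajectory is exactly a descendant of the lasso template.

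First I would fix the trajectory $x(t)$ generated by Algorithm~2 under the assumption of feasibility, and argue that it induces a well-defined trace $\sigma = \sigma_0 \sigma_1 \dots$ in the sense of the trace definition. The key observation is that Algorithm~2 processes the objectives $R_1, R_2, \dots, R_p$ in the prefix and then cycles through $R_{p+1}, \dots, R_{p+\ell}$ infinitely often. For each objective $R_i(\Sigma_i, \Gamma_i)$, feasibility means the QP~\eqref{intro_qp} returns a control input in $\mathcal{U}_{\Gamma_i}(x) \cap \mathcal{U}_{\Sigma_i}(x)$ at every state encountered. Then I would invoke Proposition~\ref{prop:fcbf} (for the FCBF constraints encoding $\Gamma_i$) to conclude that the state reaches $\Gamma_i$ in finite time $T_i < \infty$, and Proposition~\ref{prop:zcbf} (for the ZCBF constraints encoding $\Sigma_i$) to conclude forward invariance of $\Sigma_i$, so the trajectory remains in $\Sigma_i$ until it reaches $\Gamma_i$. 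Since $\Gamma_i \subseteq \Sigma_{i+1}$ by Definition~\ref{lassodefn}, the state is correctly positioned to begin the next objective, and the infinite concatenation of finite-time segments is well-defined (the sum of finitely many finite times in each suffix cycle gives a diverging sequence of switching times $t_k \to \infty$).

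Next I would establish that the resulting trace $\sigma$ is a descendant of the lasso template in the sense of Definition~\ref{def:desc}. This is the matching step and the main obstacle: I must verify each of the five conditions by tracing through how Algorithm~1 constructs the sets $\Gamma_i,\Sigma_i$ from the propositional formulas $\psi_1,\psi_2^j,\psi_3^j,\psi_4$. Specifically, the safety-set invariance gives that $J_1 \subseteq \sigma_{i,j}$ throughout (condition 1) and $J_1 \cup J_4 \subseteq \sigma_{i,j}$ in the suffix (condition 5), since $\Sigma_i = \llbracket\psi_1\rrbracket$ in the prefix and $\Sigma_{p+i} = \llbracket\psi_1\rrbracket \cap \llbracket\psi_4\rrbracket$ in the suffix; the finite-time reachability of each $\Gamma_i$ gives the terminal membership conditions 2, 3, and 4, because reaching $\Gamma_i = \llbracket\psi_2^{\mathcal{O}_2(i)}\rrbracket$ (resp.\ $\llbracket\psi_4\rrbracket$, $\llbracket\psi_3^{\mathcal{O}_3(i)}\rrbracket$) means the corresponding literal set is contained in the final symbol $\sigma_{i,n_i}$ of that objective's segment. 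The delicate part is carefully indexing the suffix repetitions so that condition 4 holds for all $d \in \{0,1,2,\dots\}$ with $m = p + d\ell + \mathcal{O}_3(i)$, which follows from the infinite \texttt{while} loop in Algorithm~2 cycling through the suffix objectives.

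Finally, having shown that $\sigma$ is a descendant, I would apply Proposition~\ref{prop:descendant} directly to conclude $\sigma \models \phi$, and since $\sigma$ is by construction the trace of the system trajectory $x(t)$, the theorem follows. I expect the bookkeeping in the descendant-matching step to be the most technical, particularly ensuring the trace is well-formed under the trace definition (handling the case where consecutive symbols coincide, which the last bullet of the trace definition restricts to infinitely repeated suffixes) and confirming that the $\Gamma_j \subseteq \Sigma_{j+1}$ compatibility in the lasso sequence is exactly what is produced by Algorithm~1's assignments; the barrier-function reachability and invariance guarantees themselves are immediate from Propositions~\ref{prop:zcbf} and~\ref{prop:fcbf}.
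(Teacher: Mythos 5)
Your proposal follows essentially the same route as the paper's proof: use the barrier-function guarantees (forward invariance of each $\Sigma_i$ and finite-time reachability of each $\Gamma_i$) to show the resulting trace is a descendant of the lasso template in the sense of Definition~\ref{def:desc}, then invoke Proposition~\ref{prop:descendant} to conclude $\sigma \models \phi$. The only cosmetic difference is that the paper cites Theorem~\ref{thm:comp_fcbf} (the composite FCBF result) rather than Proposition~\ref{prop:fcbf} for the reachability step, and your write-up is somewhat more explicit about trace well-formedness and the $\Gamma_j \subseteq \Sigma_{j+1}$ chaining, which the paper leaves implicit.
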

\begin{proof}
As per Algorithm 2, each $\Sigma_i$ is encoded as constraint(s) with ZCBFs for all $i \in \{ 1,2,\dots, p+\ell\}$. From Proposition~\ref{prop:zcbf}, this guarantees forward invariance of the atomic propositions that need to remain true or need to remain false. Since the QP from Algorithm 2 is feasible, conditions 1 and 5 from Definition \ref{def:desc} are satisfied. Since each $\Gamma_i$ is encoded as constraint(s) with FCBFs for all $i \in \{ 1,2,\dots, p+\ell\}$, from Theorem~\ref{thm:comp_fcbf} we can guarantee finite time convergence to atomic propositions that need to be reached in finite time. This satisfies conditions 2, 3 and 4 of Definition~\ref{def:desc}. Thus, all conditions in Definition \ref{def:desc} are satisfied. Since the QP is feasible, we conclude that Algorithm 2 generates a descendant $\sigma$ of the lasso template.

From Proposition~\ref{prop:descendant}, we know that given a lasso template, any descendant $\sigma$ of the lasso template is such that it satisfies the specification. From the previous analysis, we know that the QP from Algorithm 2 produces a descendant of the lasso template. The mapping $\iota$ being bijective and combining Proposition~\ref{prop:descendant} with the previous analysis, we can conclude that QP from Algorithm 2 produces a trace of the trajectory of the system that satisfies the given specification. 
That is, $\iota^{-1}(\sigma)  = \sigma \models \phi$.
\end{proof}

Note that while Algorithm 2 and Theorem~\ref{thm:formal_guarantee} assume that the QP \eqref{intro_qp} is feasible, one can always use the relaxed QP \eqref{relaxed_qp} for feasibility. In that case, although feasibility of the controller is more likely, Theorem~\ref{thm:formal_guarantee} may no longer hold since the relaxation parameters $\epsilon$ can be non-zero so that the corresponding atomic propositions are no longer satisfied. However, such a situation is not considered in this paper.

\section{Case Study}
\label{sec:case_study}
In this section, we provide a case study that details the barrier functions based QP framework which synthesizes a system trajectory that satisfies the specification. This case study was implemented in the Robotarium multi-robot testbed at Georgia Tech \cite{pickem2017robotarium}. The Robotarium consists of differential drive mobile robots which can be programmed using either MATLAB or Python.

Consider a team of three robots: one surveillance robot ($\mathtt{R}_{3}$) and two attack robots ($\mathtt{R}_{1}$ and $\mathtt{R}_{2}$). The surveillance robot needs to collect information regarding the position of two targets, and then return back to the base. Once the information has been relayed to the base by the surveillance robot, the attack robots must visit the targets infinitely often. In addition to this, the attack robots must stay connected with each other at all times, and all the robots must avoid a danger zone where they can be attacked.

Let $\mathcal{D} \subset \mathbb{R}^{2}$ be the workspace for each robot and let $\mathcal{D} \times \mathcal{D} \times \mathcal{D} \subset \mathbb{R}^{6}$ be the domain of the three robot system with regions $\mathcal{A} = \left\{ \emph{A}, \emph{B}, \emph{C}, \emph{O} \right\}$. The dynamics for each agent $i \in \{ 1,2,3\}$ is
\begin{align*}
    \begin{bmatrix} \dot{p}^{1}_{i} \\ \dot{p}^{2}_{i} \\ \dot{\phi_{i}} \end{bmatrix} &= \begin{bmatrix} \cos(\phi_{i}) & 0 \\ \sin(\phi_{i}) & 0 \\ 0 & 1\end{bmatrix} \begin{bmatrix} v_i \\ \omega_i \end{bmatrix} \text{,}
\end{align*}
where $p_{i}^{1} \in \mathbb{R}$ and $p_{i}^{2} \in \mathbb{R}$ represent the position of the robot, $\phi_{i} \in (-\pi,\pi]$ represents its orientation, $v_{i} \in \mathbb{R}$ and $\omega_{i} \in \mathbb{R}$ are the linear and angular velocity inputs to the robot respectively. Denote $x_{i} = \begin{bmatrix} p_{i}^{1} & p_{2}^{i} & \phi_{i} \end{bmatrix}^{T}$, and $\bar{x}_{i} = \begin{bmatrix} p_{i}^{1} & p_{2}^{i} \end{bmatrix}^{T}$. For implementation purposes in the Robotarium and theoretical reasons associated with the unicycle robot as discussed in \cite{paul_nonsmooth}, we use the NID technique discussed in \cite{NID} to control the differential drive robots as a single integrator model. The NID technique allows for control over both input to a differential drive robots as discussed in \cite{paul_nonsmooth}.

Target 1 is labelled as $A$, target 2 is labelled as $B$ the base is labelled as $C$, and $O$ is the danger zone (obstacle). The set of atomic propositions is given by $\Pi=\{\pi^r_i,\overline{\pi}^r_i\} \cup \{\pi^{conn},\overline{\pi}^{conn}\}$ for all $i \in \{ 1,2,3\}$ and $r \in \{A,B,C,O \}$. The regions \emph{A}, \emph{B}, \emph{C} are defined as $\llbracket \pi_{i}^{r} \rrbracket = \left\{ x \in \mathcal{D}^{3} | h_{r}(x_{i}) \geq 0 \right\}$ for all $r \in \{ A,B,C,O\}$ and for all $i \in \{1,2,3 \}$. For each $\llbracket \pi_{i}^{r} \rrbracket$ with $i \in \{ 1,2,3\}$, $r \in \{A,B,C,O \}$, let
\begin{equation}
\label{pie}
\pi_{i}^{r} = \left\{
        \begin{array}{ll}
            1 & \quad \text{if } \bar{x}_i \in  r\\
            0 & \quad \text{otherwise.}
        \end{array}
    \right.
\end{equation}

This means $\pi_{i}^{r} = 1$ if and only if agent $i$ is in region $r$.
\color{black}
The additional connectivity constraint that must be maintained by $\mathtt{R}_{1}$ and $\mathtt{R}_{2}$ is given as $h_{conn}(x) \geq 0$ where
\begin{equation}
\label{distbarrier}
h_{conn}(x) = d_{conn}^{2}(x) - ||\bar{x}_{2} - \bar{x}_{1}||^{2},
\end{equation}
where $d_{conn} : \mathcal{D} \times \mathcal{D} \times \mathcal{D} \rightarrow \mathbb{R}$ is the connectivity distance between the two agents that needs to be maintained, and $|| \bar{x}_{2} - \bar{x}_{1}||$ is the inter-agent distance. We consider
\begin{equation}
\label{conn_rule}
d_{conn}^{2}(x)  = (p_{2}^{1} + \delta_{1})^{2} + \delta_{2},
\end{equation}
where $\delta_{1}$ and $\delta_{2}$ are constants. The connectivity set corresponding to the proposition $\pi^{conn}$ is defined as $\llbracket \pi^{conn} \rrbracket = \{ x \in \mathcal{D}^{3} | h_{conn}(x) \geq 0\}$. Such a constraint captures a situation in which the robots have poor connectivity in certain areas of the workspace, which requires them to maintain a closer distance with each other. In areas where the robots have strong connectivity, they are free to maintain a larger distance from each other.

The $LTL_{robotic}$ specification for the task described previously is
\begin{equation}
\label{eq:ltl_spec}
\begin{split}
\phi = (\Diamond \pi_{3}^{A} \wedge \Diamond \pi_{3}^{B} &\wedge \Diamond \pi_{3}^{C}) \wedge \Box \Diamond (\pi_{1}^{A} \wedge \pi_{2}^{B}) \wedge \Box \Diamond (\pi_{1}^{C} \wedge \pi_{2}^{C}) \\
&\wedge \Box ( \pi^{conn} \wedge \neg \pi_{1}^{O} \wedge \neg \pi_{2}^{O} \wedge \neg \pi_{3}^{O} ) \text{.}
\end{split}
\end{equation}

From the formalism in Definition \ref{lassodefn} and Algorithm 1, we obtain the lasso-type constrained reachability objective,
\begin{flalign}
\mathcal{R}_{lasso} = \bigg( R_{1}(\Sigma_1, \Gamma_1) R_{2}(\Sigma_2, \Gamma_2) R_{3}(\Sigma_3, \Gamma_3) \bigg) \nonumber \\
\bigg( R_{4}(\Sigma_4, \Gamma_4) R_{5}(\Sigma_5, \Gamma_5) \bigg)^{\omega}
\end{flalign}
where $\Sigma_i = \llbracket \pi^{conn} \rrbracket \cap \llbracket \overline{\pi_{1}}^{O} \rrbracket \cap \llbracket \overline{\pi_{2}}^{O} \rrbracket \cap \llbracket \overline{\pi_{3}}^{O} \rrbracket$ for $i = 1,2,3,4,5$, $\Gamma_1 = \llbracket \pi_{3}^{A} \rrbracket$, $\Gamma_2 = \llbracket \pi_{3}^{B} \rrbracket$, $\Gamma_3 = \llbracket \pi_{3}^{C} \rrbracket$, $\Gamma_4 = \llbracket \pi_{1}^{A} \rrbracket \cap \llbracket \pi_{2}^{B} \rrbracket$, $\Gamma_5 = \llbracket \pi_{1}^{C} \rrbracket \cap \llbracket \pi_{2}^{C} \rrbracket$.
\begin{figure}[thpb]
    \begin{center}
    \includegraphics[width=0.9\columnwidth]{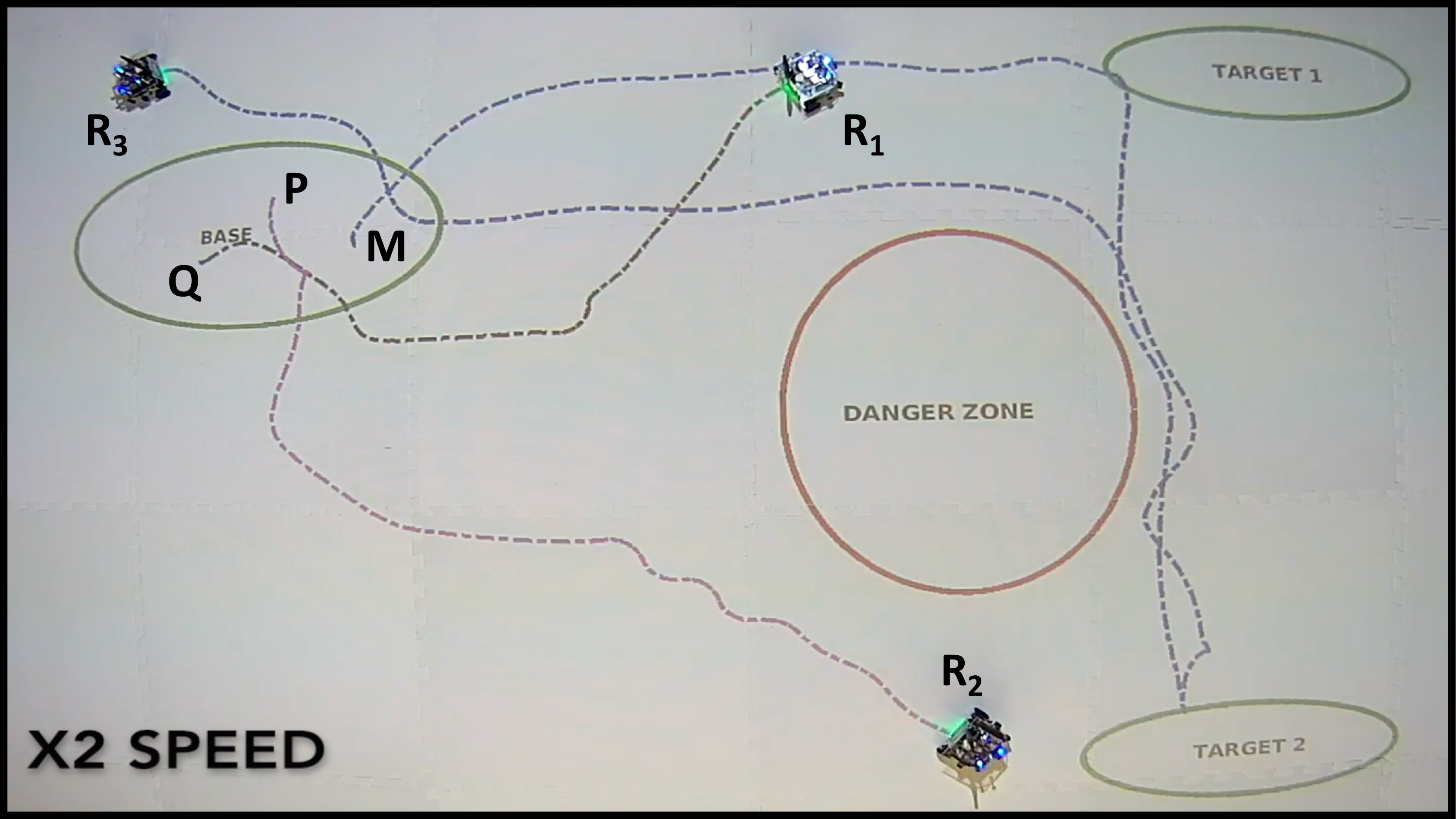}
      \caption{A still shot of the trajectories for the robots $\mathtt{R}_1$, $\mathtt{R}_2$ and $\mathtt{R}_3$ for the specification $\phi$ as in \eqref{eq:ltl_spec}. Observe that $\mathtt{R}_1$ moves temporarily away from target 1 temporarily in order to satisfy the connectivity constraint dictated by \eqref{distbarrier} and \eqref{conn_rule}, but Theorem~\ref{thm:comp_fcbf} results in feasible solutions at those points. From the figure, we observe that the robots maintain connectivity and avoid the danger zone at all times.}
      \label{fig:cs1_traj}
    \end{center}
\end{figure}

Next, we use Algorithm 2 to generate the pointwise controller for the system. Each reachability objective $R_{i}(\Sigma_i, \Gamma_i)$ for all $i \in \{ 1,2,3,4,5\}$ is encoded as a QP and is solved sequentially.
In particular, if $\mathcal{U}_{i}(x)$ is the set of feasible control laws that satisfies all the constraints for each reachability objective, then for all $i = \{ 1,2,3,4,5\}$ the QP solved is given by,
\begin{equation}
\begin{aligned}
& \underset{u \in \mathbb{R}^{6}}{\text{min}}
\quad ||u||_{2}^{2}\\
& \text{s.t \quad} u \in \mathcal{U}_{i}(x) \text{.}
\end{aligned}
\end{equation}

From Theorem~\ref{thm:formal_guarantee} we conclude that these trajectories indeed satisfy the specification $\phi$. The switching between the current reachability objective to the next is automatic. It occurs when the state of the system reaches the desired set of states. That is, the switching from reachability objective $i$ to objective $i+1$ occurs when $x \in \Gamma_i$ for all $i \in \{ 1,2,3,4\}$.

As is discussed in \cite{paul_nonsmooth}, the computation complexity of strongly convex QPs, a class to which the proposed controller belongs to, is $O((m + d)^{3})$ where $m$ is the number of decision variables, and $d$ is the number of constraints in the QP. This is the complexity of most commonly used solvers. In our experiments, we observed computation times in the order of 10-15 milliseconds for each QP. At all times, $\mathtt{R}_{1}$ and $\mathtt{R}_{2}$ stay connected as per the distance dictated by \eqref{distbarrier} and avoid the danger zone, as seen in Fig~\ref{fig:cs1_traj}. Thus, we see that by solving this sequence of constrained reachability objectives, the multi-agent system satisfies the specification. Fig~\ref{fig:cs1_traj} is a still shot of the experiment conducted on the Robotarium testbed at Georgia Tech \cite{pickem2017robotarium} \footnote{Video of Robotarium experiment -\url{https://youtu.be/EK1Zxcg-eSE}} \footnote{Code for Robotarium experiment- \url{https://bit.ly/37QkOBS}}

\section{Concluding Remarks}
\label{sec:conclusion}
In this paper, we provided a framework for the control of mobile robotic systems with control affine dynamics. In particular, we used control barrier functions and temporal logic as the tools to develop this framework. First, we discussed issues regarding feasibility of the QP based controller. We provided a new method to compose multiple FCBFs in order to obtain a larger feasible solution set as compared to existing methods in literature. We also proposed a modified QP based controller which prioritizes different ZCBFs. Second, we developed a fully automated framework which synthesizes a barrier function based controller given a specification. Last, we provided formal guarantees that the QP based controller generates a system trajectory that satisfies the given specification.

\bibliographystyle{IEEEtran}

\bibliography{bibtex}

\begin{IEEEbiography}[{\includegraphics[width=1in,height=1.25in,clip,keepaspectratio]{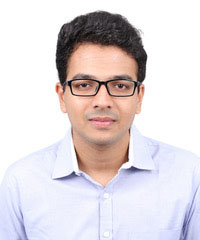}}]{Mohit Srinivasan} is currently pursuing his PhD degree in Electrical and Computer Engineering (ECE) from Georgia Institute of Technology, Atlanta, Georgia, USA. He earned his Master of Science in Electrical and Computer Engineering from Georgia Tech and Bachelor of Technology in Electrial Engineering from Veermata Jijabai Technological Institute (V.J.T.I), Mumbai, India. He was previously a research intern at Landis+Gyr, Atlanta. His research interests are primarily in multi-agent systems, control theory, and robotics.
\end{IEEEbiography}

\begin{IEEEbiography}[{\includegraphics[width=1in,height=1.25in,clip,keepaspectratio]{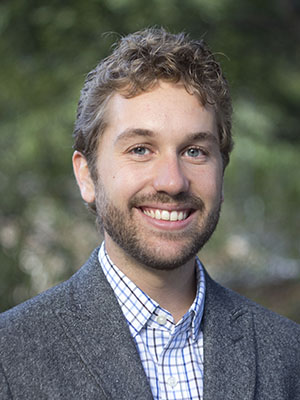}}]{Samuel Coogan} is an Assistant Professor at Georgia Tech in the School of Electrical and Computer Engineering and the School of Civil and Environmental Engineering. Prior to joining Georgia Tech in July 2017, he was an assistant professor in the Electrical Engineering Department at UCLA from 2015-2017. He received the B.S. degree in Electrical Engineering from Georgia Tech and the M.S. and Ph.D. degrees in Electrical Engineering from the University of California, Berkeley. His research is in the area of dynamical systems and autonomy and focuses on developing scalable tools for verification and control of networked, cyber-physical systems with an emphasis on transportation systems. He received a CAREER award from NSF in 2018, a Young Investigator Award from the Air Force Office of Scientific Research in 2018, and the Outstanding Paper Award for the IEEE Transactions on Control of Network Systems in 2017.
\end{IEEEbiography}

\end{document}